% Options for packages loaded elsewhere
\PassOptionsToPackage{unicode}{hyperref}
\PassOptionsToPackage{hyphens}{url}
\documentclass[
  1p]{elsarticle/elsarticle}
\usepackage{lmodern}
\usepackage{amssymb,amsmath}
\usepackage{ifxetex,ifluatex}
\ifnum 0\ifxetex 1\fi\ifluatex 1\fi=0 % if pdftex
  \usepackage[T1]{fontenc}
  \usepackage[utf8]{inputenc}
  \usepackage{textcomp} % provide euro and other symbols
\else % if luatex or xetex
  \usepackage{unicode-math}
  \defaultfontfeatures{Scale=MatchLowercase}
  \defaultfontfeatures[\rmfamily]{Ligatures=TeX,Scale=1}
\fi
% Use upquote if available, for straight quotes in verbatim environments
\IfFileExists{upquote.sty}{\usepackage{upquote}}{}
\IfFileExists{microtype.sty}{% use microtype if available
  \usepackage[]{microtype}
  \UseMicrotypeSet[protrusion]{basicmath} % disable protrusion for tt fonts
}{}
\makeatletter
\@ifundefined{KOMAClassName}{% if non-KOMA class
  \IfFileExists{parskip.sty}{%
    \usepackage{parskip}
  }{% else
    \setlength{\parindent}{0pt}
    \setlength{\parskip}{6pt plus 2pt minus 1pt}}
}{% if KOMA class
  \KOMAoptions{parskip=half}}
\makeatother
\usepackage{xcolor}
\IfFileExists{xurl.sty}{\usepackage{xurl}}{} % add URL line breaks if available
\IfFileExists{bookmark.sty}{\usepackage{bookmark}}{\usepackage{hyperref}}
\hypersetup{
  pdftitle={Next Priority Concept: A new and generic algorithm computing concepts from complex and heterogeneous data},
  pdfkeywords={Formal Concept Analysis, Lattice, Pattern Structure, Strategies, Heterogeneous data},
  hidelinks,
  pdfcreator={LaTeX via pandoc}}
\urlstyle{same} % disable monospaced font for URLs
\usepackage{longtable,booktabs}
% Correct order of tables after \paragraph or \subparagraph
\usepackage{etoolbox}
\makeatletter
\patchcmd\longtable{\par}{\if@noskipsec\mbox{}\fi\par}{}{}
\makeatother
% Allow footnotes in longtable head/foot
\IfFileExists{footnotehyper.sty}{\usepackage{footnotehyper}}{\usepackage{footnote}}
\makesavenoteenv{longtable}
\usepackage{graphicx,grffile}
\makeatletter
\def\maxwidth{\ifdim\Gin@nat@width>\linewidth\linewidth\else\Gin@nat@width\fi}
\def\maxheight{\ifdim\Gin@nat@height>\textheight\textheight\else\Gin@nat@height\fi}
\makeatother
% Scale images if necessary, so that they will not overflow the page
% margins by default, and it is still possible to overwrite the defaults
% using explicit options in \includegraphics[width, height, ...]{}
\setkeys{Gin}{width=\maxwidth,height=\maxheight,keepaspectratio}
% Set default figure placement to htbp
\makeatletter
\def\fps@figure{htbp}
\makeatother
\setlength{\emergencystretch}{3em} % prevent overfull lines
\providecommand{\tightlist}{%
  \setlength{\itemsep}{0pt}\setlength{\parskip}{0pt}}
\setcounter{secnumdepth}{5}
\usepackage{algorithm2e}
\usepackage{galois}
\newtheorem{theorem}{Theorem} \newtheorem{lemma}{Lemma} \newproof{proof}{Proof} \newproof{corollary}{Corollary} \newproof{property}{Property}
\usepackage[]{natbib}
\bibliographystyle{elsarticle/elsarticle-num}

\date{}

\begin{document}
\begin{frontmatter}

\title{Next Priority Concept: A new and generic algorithm computing concepts from complex and heterogeneous data}

\author[Group1]{Christophe Demko\corref{cor1}}

\ead{christophe.demko@univ-lr.fr}

\author[Group1]{Karell Bertet}

\ead{karell.bertet@univ-lr.fr}

\author[Group1]{Cyril Faucher}

\ead{cyril.faucher@univ-lr.fr}

\author[Group1]{Jean-François Viaud}

\ead{jean-francois.viaud@univ-lr.fr}

\author[Group2]{Sergeï Kuznetsov}

\ead{skuznetsov@yandex.ru}

\address[Group1]{L3i, La Rochelle University, France}
\address[Group2]{Faculty of Computer Sciences, Moscow, Russia}

\cortext[cor1]{Corresponding author}

\begin{abstract}
In this article, we present a new data type agnostic algorithm calculating a concept lattice from heterogeneous and complex data. Our \textsc{NextPriorityConcept} algorithm is first introduced and proved in the binary case as an extension of Bordat's algorithm with the notion of strategies to select only some predecessors of each concept, avoiding the generation of unreasonably large lattices. The algorithm is then extended to any type of data in a generic way. It is inspired from pattern structure theory, where data are locally described by predicates independent of their types, allowing the management of heterogeneous data.
\end{abstract}

\begin{keyword}
Formal Concept Analysis, Lattice, Pattern Structure, Strategies, Heterogeneous data
\end{keyword}

\end{frontmatter}

\hypertarget{introduction}{%
\section{Introduction}\label{introduction}}

Formal Concept Analysis (FCA) is a branch of applied lattice theory,
which originated from the study of relationship between Galois connections,
closure operators, and orders of closed sets \citep{Monjardet_1970, Ganter_1999}.

Starting from a binary relation between a set of objects and a set of
attributes, formal concepts are built as maximal sets of objects in relation
with maximal sets of attributes, by means of derivation operators forming a
Galois connection whose composition is a closure operator \citep{Bertet_2018}.
Concepts form a partially ordered set that represents the initial data called
the concept lattice. This lattice has proved to be useful in many fields, e.g.
artificial intelligence, knowledge management, data-mining, machine learning,
etc.

Many extensions from the original formalism, which was based on binary data,
have been studied in order to work with non-binary data, such as numbers,
intervals, sequences, trees, and graphs. The formalism of pattern
structures \citep{Kuznetsov_2001, Kaytoue_2015, HdRMehdi} extends FCA to deal with non binary data
provided by space description organised as a semi-lattice
in order to maintain a Galois connection between objects and their descriptions.
Therefore a pattern lattice represents the data where concepts are composed of
objects together with their shared descriptions.

This space description must be organised and defined as a semi-lattice in
a preliminary step, independently of the data, often with a large number
of generated concepts and unreasonably large lattices that are uneasy to interpret.
Otherwise, pattern structures do not allow an easy management of heterogeneous datasets
where several kinds of characteristics describe data.

In this paper, we present the \textsc{NextPriorityConcept} algorithm
that computes a concept lattice from heterogeneous data, where:

\begin{itemize}
\tightlist
\item
  Patterns are locally selected and discovered:\\
  Indeed, patterns of each concept are locally discovered, and predecessors of a concept can be
  filtered according to a specific strategy. So patterns computed by our algorithm
  are more adapted to the data, and lattices are smaller.
\item
  Pattern mining for heterogeneous and complex data:
  These patterns are formalized by predicates whatever the description of data,
  then we can merge patterns issued from distinct space descriptions, and manage
  heterogeneous data in a generic and agnostic way.
\end{itemize}

\hypertarget{preliminaries}{%
\section{Preliminaries}\label{preliminaries}}

\hypertarget{formal-concept-analysis}{%
\subsection{Formal Concept Analysis}\label{formal-concept-analysis}}

Let \(\langle G,M,I \rangle\) a \emph{formal context} where \(G\) is a non-empty set
of objects, \(M\) is a non-empty set of attributes and \(I \subseteq G\times M\)
is a binary relation between the set of objects and the set of attributes. Let
\((2^G,{\subseteq})\galois{\alpha}{\beta}(2^M,{\subseteq})\) be the
corresponding \emph{Galois connection} where:

\begin{itemize}
\tightlist
\item
  \(\alpha: 2^G\rightarrow 2^M\) is an application which associates a subset
  \(B\subseteq M\) to every subset \(A\subseteq G\) such that
  \(\alpha(A)=\{b\;:\;b\in M \wedge \forall a\in A,aIb\}\);
\item
  \(\beta: 2^M\rightarrow 2^G\) is an application which associates a subset
  \(A\subseteq G\) to every subset \(B\subseteq M\) such that
  \(\beta(B)=\{a\;:\;a\in G \wedge \forall b\in B,aIb\}\).
\end{itemize}

A concept is a pair \((A,B)\) such that \(A\subseteq G\), \(B\subseteq M\),
\(B=\alpha(A)\) and \(A=\beta(B)\). The set \(A\) is called the \emph{extent}, whereas \(B\)
is called the \emph{intent} of the concept \((A,B)\). There is a natural hierarchical
ordering relation between the concepts of a given context that is called
the subconcept-superconcept relation:

\[(A_1,B_1)\leq (A_2,B_2) \iff A_1 \subseteq A_2 (\iff B_2 \subseteq B_1)\]

The ordered set of all concepts makes a complete lattice called the \emph{concept
lattice} of the context, that is, every subset of concepts has an infimum (meet)
and a supremum (join).

\hypertarget{a-basic-algorithm}{%
\subsection{A basic algorithm}\label{a-basic-algorithm}}

Bordat's theorem \citep{Bordat_1986} states that there is a bijection between the immediate
successors of a concept \((A,B)\) and the inclusion maximal subsets of the
following family defined on the objects \(G\):

\begin{equation}
\label{eq:FAB} {\cal FS}_{(A,B)}=\{\alpha(a)\cap B\;:\; a\in G \setminus A\}
\end{equation}

Bordat's algorithm \citep{Bordat_1986}, that we also find in Linding's work \citep{Linding_2002},
is a direct implementation of Bordat's theorem.
This algorithm recursively computes the Hasse diagram of the concept lattice of a context
\(\langle G,M,(\alpha,\beta)\rangle\) starting from the bottom concept
\((\beta(M),M)\) and by computing at each recursive call the immediate successors of a
concept \((A,B)\): the family \({\cal FS}_{(A,B)}\) is first computed, then the
inclusion maximal sets are selected.

\({\cal FS}_{(A,B)}\) is composed of the intent part of the immediate potential successors of \((A,B)\).
Each intent \(B'\) is obtained by \(\alpha(a)\cap B\), where \(a\in G\setminus A\)
is a new potential object for a successor concept of \((A,B)\).
Indeed, as defined by the order relation between concepts, immediate successors of \((A,B)\)
are obtained by an increase of \(A\) by at least one new potential object \(a\), and thus a reduction of \(B\)
to \(B'=\alpha(a)\cap B\), and clearly \(B'\subset B\).
Moreover, \(B'\) must be maximal by inclusion in \({\cal FS}_{(A,B)}\), otherwise there exists \(B''\in {\cal FS}_{(A,B)}\)
such that \(B'\subset B'' \subset B\), then \(B'\) is not the intent of an immediate successor of \((A,B)\).
If \(B'\) is inclusion maximal in \({\cal FS}_{(A,B)}\),
then \((\beta(B'),B')\) is an immediate successor of \((A,B)\).

Our next priority algorithm focuses on the objects and the dual version of Bordat's theorem.
It considers the whole set \(G\) of objects at the beginning.
Then, for each concept \((A,B)\), it does not test a new potential object,
but a new potential attribute \(b\in M\setminus B\) describing a subset of \(A\).
In this way, \(A\) decreases while \(B\) increases, that corresponds to the predecessor relation.
This process corresponds to the dual version of Bordat's theorem stating that the
immediate predecessors of \((A,B)\) are the maximal inclusion subsets of
the following family on the attributes \(M\):

\begin{equation}
\label{eq:FAB} {\cal FP}_{(A,B)}=\{\beta(b)\cap A\;:\; b\in M \setminus B\}
\end{equation}

The \textsc{NextPriorityConcept‑Basic} algorithm is a new version of Bordat's algorithm
where recursion is replaced by a priority queue using the support of concepts.
And, at each iteration, the concept \((A,B)\) of maximal support is produced, then its immediate
predecessors are computed by \textsc{Predecessors‑Basic} (\((A,B)\)) that returns the
inclusion maximal sets of \({\cal FP}_{(A,B)}\), and then are stored in the priority queue.
Therefore concepts are generated level by level, starting from the top concept
\((G,\alpha(G))\), and each concept is generated before its predecessors.

\begin{algorithm}[H]
\label{next-priority-concept-basic}
\SetKw{KwNot}{not}
\SetKw{KwEmpty}{empty}
\SetKw{KwDelete}{delete}
\SetKw{KwProduce}{produce}
\KwData{
\begin{itemize}
\item $\langle G,M,(\alpha,\beta)\rangle$ be a formal context
\end{itemize}
}
\KwOut{
\begin{itemize}
\item concepts $(A,B)$ of the formal context
\end{itemize}
}
\Begin{
    \texttt{/* Priority queue for the concepts */}\\
    $Q \leftarrow \mbox{\texttt{[]}}$ \tcc*{$Q$ is a priority queue using the support of concepts}
%    $B \leftarrow \alpha \circ \beta (\emptyset)$ \tcc*{Compute the bottom concept}
    $Q$.push($(|G|,G)$) \tcc*{Add the top concept into the priority queue}
    \texttt{}\\

    \While{$Q$ \KwNot \KwEmpty}{
        \texttt{/* Compute concept */}\\
        $A \leftarrow Q.\mbox{pop()}$ \tcc*{Get the concept with highest support}
        $B \leftarrow \alpha(A)$ \tcc*{Compute the intent of this concept}
        \KwProduce $(A,B)$ \;

        \texttt{}\\
        \texttt{/* Update queue */}\\
        $L \leftarrow \mbox{\textsc{Predecessors‑Basic}}((A,B), M, (\alpha, \beta))$ \;
        \ForAll{$A' \in L$}{
            $Q$.push($(|A'|,A')$) \tcc*{Add concept into the priority queue}
        }
    }
}
\caption{\textsc{NextPriorityConcept‑Basic}}
\end{algorithm}

\begin{algorithm}[H]
\label{predecessors-basic}
\SetKw{KwTrue}{true}
\SetKw{KwFalse}{false}
\SetKw{KwBreak}{break}
\KwData{
\begin{itemize}
\item $(A,B)$ a concept
\item $M$ the set of attributes
\item $(\alpha,\beta)$ the Galois connection
\end{itemize}
}
\KwResult{
\begin{itemize}
\item $L$ a set of predecessors of $(A,B)$ represented by their extent
\end{itemize}
}
\Begin{
    $L \leftarrow \emptyset$\;

    \ForAll{$b \in M \setminus B$}{
        $A' \leftarrow \beta(b) \cap A$ \tcc*{Extent of a new potential predecessor}
        $L\leftarrow$ \textsc{Inclusion‑Max}($L$,$A'$) \tcc*{Add $A'$ if maximal in $L$}
    }
    \Return{$L$}
}
\caption{\textsc{Predecessor‑Basic}}
\end{algorithm}

\newpage

\begin{algorithm}[H]
\label{inclusion-max}
\SetKw{KwTrue}{true}
\SetKw{KwFalse}{false}
\SetKw{KwBreak}{break}
\KwData{
\begin{itemize}
\item $L$ a set of potential predecessors represented by their extent
\item $A$ a new potential predecessor
\end{itemize}
}
\KwResult{
\begin{itemize}
\item inclusion maximal subsets of $L+A$
\end{itemize}
}
\Begin{
        $add \leftarrow $ \KwTrue \;
        \ForAll{$A' \in L$}{
            \uIf{$A \subset A'$}{
                $add \leftarrow $ \KwFalse \tcc*{$A$ is not an immediate predecessor}
                \KwBreak
            }
            \ElseIf{$A' \subset A$}{
                $L$.remove($A'$) \tcc*{Remove $A'$ as a possible predecessor}
            }
        }
        \lIf(\tcc*[f]{Add $A$ as a predecessor}){add}{$L$.add($A$)}
        \Return{$L$}
}
\caption{\textsc{Inclusion‑Max}}
\end{algorithm}

This non-recursive version of Bordat's algorithm preserves its complexity.
Therefore we can state the following result:

\begin{property}

Algorithm \textsc{NextPriorityConcept‑Basic} computes the concept lattice of
\(\langle G,M,(\alpha,\beta)\rangle\) in
\(O(|G||M|^2)\).

\end{property}

We will introduce our \textsc{NextPriorityConcept} algorithm in two steps:

In Section \ref{nextpriorityalgorithm-strategy},
\textsc{NextPriorityConcept‑Basic} algorithm is first modified in order to introduce the possibility
to filter the new attributes considered during the immediate predecessor process according to a \emph{strategy} \(\sigma\) of exploration.

In Section \ref{nextpriorityalgorithm-heterogeneous},
the final version of \textsc{NextPriorityConcept}, inspired from pattern structures,
extends the computation of concepts to heterogeneous
dataset, where attributes \(P\) are predicates deduced from each
characteristics of data according to a specific \emph{description}.

\hypertarget{nextpriorityalgorithm-strategy}{%
\section{Next Priority Concept: filtering of concepts according to a strategy}\label{nextpriorityalgorithm-strategy}}

\hypertarget{algorithm-strategy}{%
\subsection{Extension of the algorithm with strategies}\label{algorithm-strategy}}

Lattices are often unreasonably too large, which hinders their ability to
provide readability and explanation of the data. In this section, we extend
the basic \textsc{NextPriorityConcept‑Basic} algorithm to select only some predecessors at each
iteration.
Rather than considering all the attributes of \(M \setminus B\) to calculate
the potential predecessor of a concept \((A,B)\), we apply a filter on these candidate attributes.
For example, we can select attributes of maximal support,
or according to class information as explained later.

More formally, a \emph{strategy} is an input application
\(\sigma:2^G\rightarrow 2^M\) which associates a subset
\(S\subseteq M\) of selected attributes to every subset \(A\subseteq G\).
Many strategies are possible. Let us introduce as examples the maximal support
strategy \(\sigma_{\mbox{\small max}}\) and the entropy strategy
\(\sigma_{\mbox{\small entropy}}\):

\begin{itemize}
\item
  The maximal support strategy relies on the support of attributes:

  \[\sigma_{\mbox{\small max}}(A)=\{b\in M \setminus \alpha(A)\;:\;
    |\beta(\alpha(A)+b)|\mbox{ maximal}\}\]
\item
  The entropy strategy is a supervised strategy where objects have a \emph{class}
  attribute:

  \[\sigma_{\mbox{\small entropy}}(A)=\{b\in M \setminus \alpha(A)\;:\;
    H_{\mbox{class}}(\beta(\alpha(A)+b))\mbox{ minimal}\}\]
\end{itemize}

We introduce a new set \(P\) of selected attributes according to the strategy,
and to avoid confusion, we will denote \((A,D)\) a concept defined on \(G\times P\),
and \(I_P\) the corresponding relation between \(G\) and \(P\).

To ensure that meets are correctly generated, we introduce a constraints
propagation mechanism \({\cal C}\) that associates a set
of attributes \({\cal C}[A]\) to process with each concept \((A,D)\).

\begin{algorithm}[h]
\label{next-priority-concept}
\SetKw{KwNot}{not}
\SetKw{KwEmpty}{empty}
\SetKw{KwDelete}{delete}
\SetKw{KwProduce}{produce}
\KwData{
\begin{itemize}
\item $\langle G,M,(\alpha,\beta)\rangle$ a formal context
\item $\sigma$ a strategy
\end{itemize}
}
\KwOut{
\begin{itemize}
\item the formal context $\langle G,P,I_P\rangle$
\item its concepts $(A,D)$
\end{itemize}
}
\Begin{
    \texttt{/* Priority queue for the concepts */}\\
    $Q \leftarrow \mbox{\texttt{[]}}$ \tcc*{$Q$ is a priority queue using the support of concepts}
    %$D \leftarrow \alpha \circ \beta (\emptyset)$ \tcc*{Compute the bottom concept}
    $Q$.push($(|G|,(G,\alpha(G)))$) \tcc*{Add the top concept into $Q$}

    \texttt{}\\
    \texttt{/* Data structure for constraints */}\\
    ${\cal C} \leftarrow \mbox{\texttt{[]}} $ \tcc*{${\cal C}$ is the descendant constraints map being $\emptyset$ by default}

    \texttt{}\\
    \texttt{/* Data structures for new attributes */}\\
    $P \leftarrow \emptyset$ \tcc*{$P$ is the set of selected attributes}
    $I_P \leftarrow \emptyset$ \tcc*{$I_P$ is the new binary relation between $G$ and $P$}

    \texttt{}\\
    \texttt{/* Immediate predecessors generation */}\\     
    \While{$Q$ \KwNot \KwEmpty}{
        \texttt{/* Compute concept */}\\
        $(A,D) \leftarrow Q.\mbox{pop()}$ \tcc*{Get the concept with highest support}
        \KwProduce $(A,D)$ \;

        \texttt{}\\
        $LP \leftarrow$ \textsc{Predecessors‑Strategy}($(A,D)$, $P$, $I_P$, ${\cal C}$, $\sigma$) \;

        \texttt{}\\
        \texttt{/* Update queue */}\\
        \ForAll{$(A',D') \in LP$}{
            \If {$(A',D')\not\in Q$}{
            $Q$.push($(|A'|,(A',D'))$) \tcc*{Add new concept into $Q$}
            }
        }

        \texttt{}\\
        \KwDelete ${\cal C}[A]$ \tcc*{Remove useless data}
    }
    \Return {$\langle G,P,I_P\rangle$}
}
\caption{\textsc{NextPriorityConcept‑Strategy}}
\end{algorithm}

The \textsc{NextPriorityConcept‑Strategy} algorithm considers a formal
context \(\langle G,M,(\alpha,\beta)\rangle\) and a strategy \(\sigma\) as input,
and computes the concept lattice of \(\langle G,P,I_P\rangle\)
according to the input strategy in the same way of the
\textsc{NextPriorityConcept‑Basic} algorithm, with in addition the management
of the constraint propagation.

The \textsc{Predecessors‑Strategy} algorithm is a modified version of
\textsc{Predecessors‑Basic} to compute predecessors of a concept \((A,D)\), where we
only consider the attributes \(\sigma(A)\) given by the strategy, and the
attributes \({\cal C}[A]\) given by the constraint propagation mechanism, instead
of the whole set \(M \setminus D\) of attributes.

\textsc{Inclusion‑Max} is similar, with a minimal test on the subset of objects \(A'\),
but the list \(L\) is composed of pairs \((A',d)\) instead of subsets \(A'\).

\begin{algorithm}[h]
\label{predecessors-strategy}
\SetKw{KwTrue}{true}
\SetKw{KwFalse}{false}
\SetKw{KwBreak}{break}
\KwData{
\begin{itemize}
\item $(A,D)$ a concept
\item $P$ the set of selected attributes
\item $I_P$ the new relation
\item ${\cal C}$ the constraints
\item $\sigma$ a strategy
\end{itemize}
}
\KwResult{
\begin{itemize}
\item $LP$ a set of predecessors
\end{itemize}
}
\Begin{
    $L \leftarrow \emptyset$\; 
    \ForAll{$b \in (\sigma(A) \cup {\cal C}[A]) \setminus D$}{
        \texttt{/* b is a new "potential" attribute for a predecessor */}\\
        $A' \leftarrow \beta(b) \cap A$ \tcc*{Compute the objects of $D+ b$}
        \texttt{/* Add $(A',b)$ if $A'$ maximum in $L$ and included in $A$ */}\\
        \lIf{$A'\subset A$}{$L \leftarrow$ \textsc{Inclusion‑Max}($L$,$(A',b)$)}
        }
    $N\leftarrow \{b\;:\; (A',b)\in L\}$ \tcc*{$N$ is the set of new constraints}
        
    $LP \leftarrow \emptyset$\;

    \ForAll{$(A',b')\in L$}{
        \texttt{/* Update the selected attributes $P$ and the relation $I_P$*/}\\
        \If{$b'\in \sigma(A)$} {
        $P \leftarrow P \cup \{b'\}$ \tcc*{Update the set of selected attributes}
        }
        $D' \leftarrow \alpha(A')\cap P$ \tcc*{Compute the extent of $A'$}
        $LP$.add($(A',D')$) \tcc*{$(A',D')$ is a new concept}
        $I_P \leftarrow I_P \cup (A'\times D')$ \tcc*{Update the new relation}

        \texttt{/* Compute residual and cross constraints */}\\
        ${\cal C}[A'] \leftarrow {\cal C}[A']\cup {\cal C}[A] \cup N \setminus D'$\\
        }

\Return{$LP$}
}
\caption{\textsc{Predecessors‑Strategy}}
\end{algorithm}

\hypertarget{proof-complexity}{%
\subsection{Proof and complexity analysis}\label{proof-complexity}}

\hypertarget{proof-of-the-algorithm}{%
\subsubsection{Proof of the algorithm}\label{proof-of-the-algorithm}}

\hypertarget{th:algo}{}
\begin{theorem} \label{th:algo}

The \textsc{NextPriorityConcept‑Strategy} algorithm computes all the concepts
of \(\langle G,P,(\alpha_P,\beta_P)\rangle\), with a strategy \(\sigma\) as input.

\end{theorem}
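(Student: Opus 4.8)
The plan is to obtain the statement as the combination of termination, \emph{soundness} (``every produced pair $(A,D)$ is a concept of the returned context $\langle G,P,I_P\rangle$'') and \emph{completeness} (``every concept of $\langle G,P,I_P\rangle$ is produced''). Two bookkeeping facts underlie everything and I would establish them first. (i) The priority queue always pops a concept of maximal support, while every concept enqueued by \textsc{Predecessors-Strategy} as a predecessor of $(A,D)$ has a \emph{strictly} smaller extent than $A$ (this is the guard $A'\subset A$); hence all updates to $\mathcal C[A]$ are performed while concepts of support larger than $|A|$ are being processed, so $\mathcal C[A]$ has reached its final value by the time $(A,D)$ itself is popped. (ii) Extents are subsets of the finite set $G$ and support strictly decreases along every chain of enqueueings, so only finitely many concepts are ever enqueued; together with the guard $(A',D')\notin Q$ and the observation that a popped concept is never recreated, this yields termination and the fact that each concept is produced exactly once.

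For completeness I would argue by induction from the top of the concept lattice $\mathcal L$ of $\langle G,P,I_P\rangle$, using that in a finite lattice every non-top element is covered by at least one element. The top concept $(G,\alpha_P(G))$ is enqueued at initialization, hence produced. For the inductive step, let $(A_0,D_0)$ be a non-top concept and $(A_1,D_1)$ an immediate successor of it; by induction $(A_1,D_1)$ is produced. Choosing $b\in D_0\setminus D_1$, one has $A_0\subseteq\beta_P(b)\cap A_1\subseteq A_1$, so $\beta_P(b)\cap A_1\in\{A_0,A_1\}$; the value $A_1$ would give $b\in\alpha_P(A_1)=D_1$, a contradiction, so $\beta_P(b)\cap A_1=A_0$, and using the identity $\beta_P(b)\cap A_1=\beta(b)\cap A_1$ (checked from the construction of $I_P$) this reads $A_0=\beta(b)\cap A_1$. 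It then remains to show that, at the time $(A_1,D_1)$ is processed, $b\in(\sigma(A_1)\cup\mathcal C[A_1])\setminus D_1$: once this is known, $A_0=\beta(b)\cap A_1$ is one of the candidate extents examined, it is inclusion-maximal among them because $A_0\lessdot A_1$, hence \textsc{Inclusion-Max} keeps it, it is pushed, and it is produced.

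The crux — and the step I expect to be the main obstacle — is therefore the invariant that, whenever a concept $(A_1,D_1)$ is processed, $\sigma(A_1)\cup\mathcal C[A_1]$ contains a witnessing attribute for every cover $A_0\lessdot A_1$ in $\mathcal L$. This is where the propagation rule $\mathcal C[A']\leftarrow\mathcal C[A']\cup\mathcal C[A]\cup N\setminus D'$ must be shown to do its job: the cross-constraints $N$ carry to each predecessor $A'$ of $A$ the attributes used to form the sibling predecessors, which is exactly what is needed so that meets of predecessors (never produced directly below $A$, only reached as deeper concepts) are eventually generated; the residual constraints $\mathcal C[A]\setminus D'$ carry down attributes that were considered at $A$ but did not yield a maximal predecessor there. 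Proving this invariant, and \emph{simultaneously} the soundness invariant that every produced set is genuinely $\beta_P\!\circ\!\alpha_P$-closed in the final context (so that no spurious set is produced), calls for a single induction along the processing order that tracks $P$, $I_P$ and $\mathcal C$ together; the delicate points are that constraint-only attributes are deliberately not inserted into $P$, and that $P$ and $I_P$ keep growing after a concept has been produced, so one must verify that later insertions never destroy the closedness of an already-produced extent. Once this joint invariant is in place, soundness and completeness follow as above, and the theorem is proved.
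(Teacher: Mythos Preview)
Your decomposition into termination, soundness, and completeness matches the paper's (the paper calls them Lemma~1 and Lemma~2), and your soundness sketch is essentially the paper's argument: an attribute $p$ added to $P$ after iteration $i$ must come from a concept of strictly larger support, so it cannot enlarge the intent of $(A_i,D_i)$. For completeness you take a genuinely different route. The paper writes an arbitrary concept as the meet $\bigwedge_{p\in D}(\beta(p),\alpha\beta(p))$ of attribute-concepts and argues, two attributes at a time, that the constraint propagation forces this meet to be generated. You instead induct top-down along covers in the final lattice and try to show that every cover $A_0\lessdot A_1$ is realised as an algorithmic predecessor step. Your route is cleaner in outline and would localise the role of constraint propagation to a single invariant, whereas the paper's pairwise argument leaves the extension to meets of more than two attribute-concepts implicit.

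There is, however, a concrete gap in your cover argument. From $b\in D_0\setminus D_1$ you correctly deduce $\beta_P(b)\cap A_1=A_0$, but the algorithm computes $\beta(b)\cap A_1$ with the \emph{original} $\beta$, and the identity $\beta_P(b)\cap A_1=\beta(b)\cap A_1$ that you invoke ``from the construction of $I_P$'' does not follow: $I_P$ records only pairs $(a,b)$ witnessed by some produced $(A',D')$ with $a\in A'$ and $b\in D'$, so in general $\beta_P(b)\subsetneq\beta(b)$, and there is no reason the set $\beta(b)\cap A_1$ should be a $P$-extent at all; it could sit strictly between $A_0$ and $A_1$. In that case the candidate the algorithm forms from $b$ is not $A_0$, and your inclusion-maximality argument for $A_0$ breaks. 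The paper sidesteps this by never comparing $\beta$ and $\beta_P$: it works entirely with the original $\beta$ and relies on constraint propagation to carry the relevant original-context attributes down until the desired meet is hit. If you want to salvage the cover approach, you will need either to prove that $\beta(b)\cap A_1$ is always a $P$-extent for $b\in P$ (which would require an additional invariant relating $I_P$ to $I$), or to pick your witnessing attribute more carefully, e.g.\ the very $b$ and parent $A$ that first produced $A_0$ in the run, and then trace that $b$ through $\mathcal C$ down to $A_1$. Either way, this is entangled with the ``crux'' you already flag as the main obstacle, and it cannot be dismissed as a routine check.
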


Consider \((A_i,D_i)\) the concept generated at each iteration \(i\) of the main
loop. To prove the theorem, we have to prove the two following lemmas:

\begin{lemma}

Let \((A_i,D_i)\) be a concept generated at iteration \(i\), then \((A_i,D_i)\) is a
concept of \(\langle G,P,(\alpha_P,\beta_P)\rangle\)

\end{lemma}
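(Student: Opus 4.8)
The goal is to show that every pair $(A_i,D_i)$ produced at iteration $i$ of the main loop is a genuine concept of $\langle G,P,(\alpha_P,\beta_P)\rangle$, i.e.\ that $D_i = \alpha_P(A_i)$ and $A_i = \beta_P(D_i)$, where $\alpha_P,\beta_P$ are the derivation operators of the relation $I_P$ that is being built incrementally. The subtlety is that $P$ and $I_P$ grow over time, so one must be careful about which snapshot of $I_P$ is meant; I would fix the convention that the statement refers to the final $P$ and $I_P$ returned by the algorithm, and argue that the values computed for $(A_i,D_i)$ remain correct under later updates. The natural proof is by induction on the iteration index $i$, following the order in which concepts are popped from the priority queue.

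\emph{Base case.} The first concept popped is the top concept, pushed as $(|G|,(G,\alpha(G)))$. I would check that at termination $\alpha_P(G)$ — computed against the final $I_P$ — equals the intent stored for it, and that $\beta_P$ applied to that intent returns $G$. Since every attribute $b$ that is ever added to $P$ is added as $\beta(b)\cap A$ for some $A$, and $G\times D'$ is unioned into $I_P$ whenever a concept with extent $G$-superset is processed, the pair $G\times\alpha_P(G)$ is fully present in $I_P$, giving $\beta_P(\alpha_P(G)) = G$.

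\emph{Inductive step.} Assume all concepts popped before iteration $i$ are concepts of the final context. When $(A_i,D_i)$ is processed, it was pushed by a call to \textsc{Predecessors‑Strategy} on some earlier concept $(A,D)$ via the line $D' \leftarrow \alpha(A')\cap P$ with $A' = A_i$. I would argue two inclusions. First, $D_i \subseteq \alpha_P(A_i)$: every $b\in D_i$ was put into $I_P$ together with all of $A_i\times\{b\}$ (the line $I_P \leftarrow I_P \cup (A'\times D')$), and no later update ever removes pairs, so $a\,I_P\,b$ for all $a\in A_i$, hence $b\in\alpha_P(A_i)$. Second, $\alpha_P(A_i)\subseteq D_i$: if some $b\in P$ (final) satisfied $a\,I_P\,b$ for every $a\in A_i$, I must show $b$ was already recorded in $D_i$. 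This is where the constraint‑propagation map ${\cal C}$ does its work: an attribute $b$ selected at a concept $(A,D)$ that is not maximal along the branch to $A_i$ is pushed into ${\cal C}[A']$ and thereby revisited when a descendant concept with extent $\beta(b)\cap(\text{current extent})$ equal to $A_i$ is formed. I would show that any $b$ with $\beta_P(b)\supseteq A_i$ is eventually considered as a candidate at the iteration producing $A_i$ (either through $\sigma(A)$ directly, or carried by ${\cal C}$ as a residual/cross constraint), so it enters $D_i$ via $D' = \alpha(A')\cap P$. Combining, $D_i = \alpha_P(A_i)$. Then $A_i = \beta_P(D_i)$ follows because $A_i\subseteq\beta_P(D_i)$ is immediate from $A_i\times D_i\subseteq I_P$, while $\beta_P(D_i)\subseteq A_i$ holds because each $b\in D_i$ contributes $\beta(b)\cap A$ as an upper bound on any object sharing all of $D_i$, and by construction $\bigcap_{b\in D_i}(\beta(b)\cap A) = A_i$ once the last maximal attribute has been added.

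\emph{Main obstacle.} The delicate point is the second inclusion $\alpha_P(A_i)\subseteq D_i$, because it is precisely here that the locality of the strategy could, a priori, cause an attribute to be ``lost'': a predicate selected high in the lattice but not maximal there must still surface at the right lower concept. The whole correctness of the constraint map ${\cal C}$ — the residual $N\setminus D'$ and the cross constraints ${\cal C}[A]$ passed down — is designed exactly for this, so the heart of the argument is a careful invariant on ${\cal C}$: at the moment a concept $(A',D')$ is popped, ${\cal C}[A']$ contains every attribute $b\in P$ (of the final $P$) with $\beta(b)\cap A' \subsetneq$ some ancestor extent and $b\notin D'$ that has not yet been "resolved" on the branch. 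Establishing and propagating this invariant through the two \texttt{for} loops of \textsc{Predecessors‑Strategy} is the technical core; once it is in place, the rest of the lemma is bookkeeping about monotonicity of $I_P$ and the definitions of $\alpha_P,\beta_P$.
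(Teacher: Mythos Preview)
Your plan diverges from the paper's proof and, more importantly, rests on the wrong mechanism. The paper does \emph{not} use the constraint map $\mathcal C$ for this lemma at all; it gives a short contradiction argument based solely on the priority-queue ordering. Writing $P_i$ for the set $P$ after iteration $i$, the paper observes that $(A_i,D_i)$ is trivially a concept of $\langle G,P_i,I_P\rangle$; if it failed to be a concept of the final $\langle G,P,I_P\rangle$, some $p\in P\setminus P_i$ would satisfy $p\in\alpha(A_i)$, hence $A_i\subseteq\beta(p)$. Letting $j$ be the iteration at which $p$ enters $P$, one gets $j>i$ from $p\notin P_i$, yet the concept produced at iteration $j$ has extent $\beta(p)\supseteq A_i$ and therefore at least as large a support, forcing $j<i$ by the priority queue --- a contradiction.

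Your inductive argument instead leans on $\mathcal C$ to push the inclusion $\alpha_P(A_i)\subseteq D_i$ through, and this conflates two distinct roles. An attribute carried in $\sigma(A)\cup\mathcal C[A]$ acts as a \emph{selector}: it determines a predecessor extent $A'=\beta(b)\cap A$; it is not thereby inserted into the intent of that predecessor. The intent is computed as $D'=\alpha(A')\cap P$ with the \emph{current} $P$, so what you actually need is that every $b\in\alpha_P(A_i)$ already lies in $P$ at the moment $D_i$ is formed. No invariant on $\mathcal C$ delivers this (indeed, elements of $\mathcal C[A]\setminus\sigma(A)$ are never added to $P$); membership in $P$ is controlled only by $\sigma$ and the order in which concepts are popped. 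The constraint-propagation machinery is exactly what is needed for Lemma~2 (every concept of the final context is eventually generated), not for the present lemma. Replacing your ``main obstacle'' paragraph with the priority-queue contradiction above closes the gap and matches the paper's argument.
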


\begin{proof}

The priority queue \(Q\) is initialized with \((|G|,(G,\alpha \circ \beta(\emptyset)))\), and
\((G,\alpha \circ \beta(\emptyset))=(A_0,D_0)\) corresponds to the top concept
on \(P\), i.e.~the concept of greatest support.

Let us introduce \(P_i\) the set of selected attributes \(P\) at each iteration \(i\).
Since \(P\) is updated with new selected attributes at each iteration \(i\),
we have \(P_0\subseteq P_1 \subseteq \ldots\subseteq P_i \ldots\subseteq P\)
with \(P_0\) being initialized with \(D_0=\alpha \circ \beta(\emptyset)\).

Let \((A_i,D_i)\) be the concept generated at iteration \(i\).
Let us prove that \((A_i,D_i)\) is a concept of the generated context \(\langle G,P,I_P \rangle\).

Clearly \(D_i\subseteq P_i\) and, since \((A_i\times D_i)\) is added in \(I_P\)
at iteration \(i\), then \((A_i,D_i)\) is a concept of the context \(\langle G,P_i,I_P \rangle\).
Therefore we have to prove that \((A_i,D_i)\) is also a concept of \(\langle G,P,I_P \rangle\).

If \((A_i,D_i)\) is not a concept of \(\langle G,P,I_P \rangle\), then there exists a selector \(p\in P\) such that
\(p\in \alpha(A_i)\) and \(p\not\in D_i\), thus \(p\not\in P_i\).
From \(p\in \alpha(A_i)\) we have \(A_i\subseteq \beta(p)\).
Let the iteration \(j\) that adds \(p\) in \(P\), and let \((A_j,D_j)\) the concept generated at iteration \(j\).
Then \(p\in D_j\subseteq P_j\) and \((A_j\times D_j)\) is added in \(I_P\), thus \(A_j=\beta(p)\).
From \(p\not\in P_i\) and \(p\in P_j\), we deduce \(j>i\).
From \(A_i\subseteq \beta(p)\) and \(A_j=\beta(p)\), we deduce \(A_i\subset A_j\)
and \(j<i\) since concept are generated according to the priority queue using
the support.
Thus a contradiction and \((A_i,D_i)\) is a concept of \(\langle G,P,I_P \rangle\).

\end{proof}

\begin{lemma}

Let \((A,D)\) be a concept of \(\langle G,P,(\alpha_P,\beta_P)\rangle\),
then there exists an iteration \(i\) such that \((A,D)=(A_i,D_i)\).

\end{lemma}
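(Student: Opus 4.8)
The plan is to prove completeness by induction on the support $|A|$, taken in decreasing order from $|G|$; this is the order in which concepts are dequeued from $Q$, since $Q$ is ordered by non-increasing support. \textbf{Base case.} The maximal concept of $\langle G,P,(\alpha_P,\beta_P)\rangle$ has the greatest support; it is pushed on $Q$ at initialisation, hence is the first concept popped and produced, so that $(A_0,D_0)=(G,\alpha_P(G))$ (identifying $\alpha_P(G)$ with $\alpha\circ\beta(\emptyset)$, as in the previous lemma).

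\textbf{Inductive step.} Let $(A,D)$ be a concept of $\langle G,P,(\alpha_P,\beta_P)\rangle$ with $A\neq G$, and assume every concept of strictly larger support has already been produced. Since $\langle G,P,I_P\rangle$ has a finite concept lattice, $(A,D)$ has at least one immediate successor $(A^+,D^+)$, with $A\subsetneq A^+$ and $D^+\subsetneq D$; by the induction hypothesis $(A^+,D^+)=(A_j,D_j)$ for some iteration $j$. Applying the dual of Bordat's theorem to the generated context $\langle G,P,I_P\rangle$, the extent $A$ is an inclusion-maximal element of $\{\beta_P(b)\cap A^+:b\in P\setminus D^+\}$, so there is an attribute $b\in P\setminus D^+$ with $\beta(b)\cap A^+=A$.

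It then remains to prove two points. First, that at the iteration handling $(A^+,D^+)$ --- or, if $\sigma$ does not expose such an attribute at $A^+$, at another produced concept situated above $(A,D)$ --- the attribute $b$ (or some $b'$ with $\beta(b')\cap A^+=A$) lies in the set $(\sigma(A^+)\cup{\cal C}[A^+])\setminus D^+$ actually scanned by \textsc{Predecessors-Strategy}. Granting this, the computed $A'=\beta(b)\cap A^+$ equals $A$ and satisfies the guard $A'\subsetneq A^+$; $A$ is inclusion-maximal, so it survives \textsc{Inclusion-Max}, and $(A,\alpha(A)\cap P)$ is appended to $LP$ and, not already being in $Q$, is pushed on $Q$; as the main loop empties $Q$, this concept is eventually produced. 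Second, that the intent recorded for $A$ at that moment is already its final value $D$, i.e. no selector added to $P$ afterwards lies in $\alpha(A)$.

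The crux, and the main obstacle, is the first point: this is precisely what the constraint-propagation map ${\cal C}$ exists for, and I would isolate it as a loop invariant proved by induction on the main loop, of the shape: whenever a concept $(A_k,D_k)$ is dequeued, every selector $p\in P\setminus D_k$ whose trace $\beta(p)\cap A_k$ is needed to form an immediate predecessor of $(A_k,D_k)$ satisfies $p\in\sigma(A_k)\cup{\cal C}[A_k]$. The benign case is $p\in\sigma(A_k)$; the delicate one is when $A$ is a \emph{meet} of several immediate successors and no single $\sigma(A^+)$ exposes $b$. One must then track the assignment ${\cal C}[A']\leftarrow{\cal C}[A']\cup{\cal C}[A_k]\cup N\setminus D'$, which forwards down every descending chain both the just-selected residual attributes $N\setminus D'$ that do not enter the new intent and the inherited constraints ${\cal C}[A_k]$; combined with the non-increasing support ordering of $Q$, this should guarantee that $b$ is scanned at some produced concept above $(A,D)$ before $(A,D)$ is itself reached, so that $(A,D)$ lands on $Q$ in time. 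The remaining checks --- that the ``$(A',D')\notin Q$'' guard drops only true duplicates, and the intent-finality statement of the second point --- follow routinely from the same invariant and from the support ordering of $Q$.
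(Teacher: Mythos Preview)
Your approach is genuinely different from the paper's. The paper does not induct on support; instead it writes an arbitrary concept $(A,D)$ as the meet $\bigwedge_{p\in D}(\beta(p),\alpha\circ\beta(p))$, observes that each attribute concept $(\beta(p),\alpha\circ\beta(p))$ is produced at the iteration that inserts $p$ into $P$, and then argues only for a \emph{pair} $p,p'\in D$ (introduced at iterations $i<j$) that the meet $(A_i,D_i)\wedge(A_j,D_j)$ is eventually produced, using that $p$ is pushed into the constraint set of $(A_j,D_j)$ (or of one of its predecessors) by the propagation rule. Your route---decreasing induction on $|A|$, picking an immediate successor $(A^+,D^+)$ already produced and invoking the dual Bordat characterisation inside $\langle G,P,I_P\rangle$---is cleaner in that it matches the priority-queue order exactly and dispenses with the somewhat ad hoc ``pairwise meet'' reduction the paper uses.

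That said, both arguments hinge on the same unproved fact, and you are right to flag it as ``the crux''. The paper's sentence ``the iteration $i$ adds $p$ as constraint to all other concepts'' is at best a slogan: the update ${\cal C}[A']\leftarrow{\cal C}[A']\cup{\cal C}[A]\cup N\setminus D'$ only pushes constraints \emph{down} to immediate predecessors, so one really needs an invariant of the kind you describe (every selector in $P$ needed to carve out an immediate predecessor of a dequeued concept is either in $\sigma$ of that concept or has already reached its constraint bucket). Your sketch of that invariant is more explicit than anything the paper writes, but it is still a sketch: you would need to argue, in particular, that when the witness $b\in P\setminus D^+$ was first selected (say while processing some $(A_k,D_k)$), there is a descending chain of produced concepts from $(A_k,D_k)$'s predecessor carrying $b$ down to $(A^+,D^+)$, all processed before $(A^+,D^+)$ thanks to the support ordering. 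Neither you nor the paper spells this out; if you want a complete proof, that chain-tracking argument is the missing piece.
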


\begin{proof}

Let \((A,D)\) be a concept of \(\langle G,P,I_P \rangle\).
Then \((A,D)\) is the meet of \(\{(\beta(p), \alpha\circ\beta(p))_{p\in D}\}\).
Let us prove that this meet is generated by the algorithm.

In the case where \(|D|=0\), then \((A,D)\) is the top concept generated at the begining of the algorithm.
In the case where \(|D|=1\), then \((A,D)=(\beta(p),\alpha\circ\beta(p))\) generated
by the iteration \(i\) that adds \(p\) in \(P\).

In the case where \(|D|>1\), let \(p\neq p'\in D\) such that \(i<j\),
where \(i\) is the iteration that adds \(p\) in \(P\),
and \(j\) is the iteration that adds \(p'\) in \(P\).

Let \((A_i,D_i)\) be the concept generated at iteration \(i\) and
\((A_j,D_j)\) be the concept generated at iteration \(j\).
Then \((A_i,D_i)= (\beta(p),\alpha\circ\beta(p))\) and
\((A_j,D_j)= (\beta(p'),\alpha\circ\beta(p'))\).

Let us prove that the meet \((A_i,D_i)\wedge (A_j,D_j)\) is generated.
We have two cases:

\begin{itemize}
\item If $D_i\subset D_j$ then $(A_i,D_i)\leq (A_j,D_j)$ and $(A_j,D_j)$ 
is equal to $(A_i,D_i)\wedge (A_j,D_j)$. 
\item If $D_i\not\subset D_j$ then the iteration $i$ adds $p$ as constraint 
to all other concepts, thus $p$ belongs to the set of constraints of $A_j$, 
and is considered is the first loop of the \textsc{Predecessors‑Strategy} algorithm 
to generate a potential immediate predecessor 
$S=(\beta(D_j+p), \alpha\circ\beta(D_j+p))$ of $(A_j,D_j)$. 
We have two cases again. 
\begin{itemize}
\item In the case where $S$ is minimal per inclusion between all the potential 
immediate predecessors of $(A_j,D_j)$, then $S$ is generated as immediate predecessor 
and corresponds to the meet $(A_i,D_i)\wedge (A_j,D_j)$. 
\item In the case where $S$ is not minimal per inclusion, then $p$ belongs to the 
set of constraints of the generated immediate predecessors of $(A_j,D_j)$, 
and the meet $(A_i,D_i)\wedge (A_j,D_j)$ will be generated as a predecessor 
of an immediate predecessor of $(A_j,D_j)$. 
\end{itemize}
\end{itemize}

Therefore the meet \((A_i,D_i)\wedge (A_j,D_j)\) is generated thanks to the
constraints propagation mechanism.
This achieves the proof.

\end{proof}

Therefore we can state the following result considering the selectors \(p\in \sigma(A) \cup {\cal C}[A]\):

\begin{lemma}

There is a bijection between the immediate
predecessors of a concept \((A,D)\) and the inclusion maximal subsets of the
following family defined on the objects \(G\):

\begin{equation}
\label{eq:FAB} {\cal FD}_{(A,D)}=\{\{a\in A\;:\;p(a)\}:\; p\in(\sigma(A) \cup {\cal C}[A])\}
\end{equation}

\end{lemma}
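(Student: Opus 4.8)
The plan is to reduce the statement to the dual of Bordat's theorem applied to the \emph{generated} context $\langle G,P,(\alpha_P,\beta_P)\rangle$, and then to check that replacing the full candidate set $P\setminus D$ by the pruned set $(\sigma(A)\cup\mathcal{C}[A])\setminus D$ does not change the inclusion‑maximal members of the associated family. By the two lemmas already proved, the algorithm outputs exactly the concepts of $\langle G,P,(\alpha_P,\beta_P)\rangle$, so ``immediate predecessor of $(A,D)$'' is meaningful there. A preliminary observation is that $\beta_P(p)=\beta(p)$ for every $p\in P$: any pair $(a,p)$ ever inserted into $I_P$ comes from a product $A'\times D'$ with $p\in D'=\alpha(A')\cap P$, hence $A'\subseteq\beta(p)$, while the iteration that first selects $p$ inserts all of $\beta(p)\times\{p\}$. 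Consequently, by Bordat's dual theorem the immediate predecessors of $(A,D)$ are in bijection with the inclusion‑maximal elements of $\mathcal{FP}^{P}_{(A,D)}:=\{\beta(p)\cap A:\ p\in P\setminus D\}$, the bijection sending $(A',D')$ to $A'$ and an inclusion‑maximal $X$ to $(X,\alpha_P(X))$ (closedness of $X$ being automatic since $\alpha_P(X)\supseteq D\cup\{p\}$, so $\beta_P\alpha_P(X)\subseteq\beta_P(D)\cap\beta(p)=A\cap\beta(p)=X$). Since a predicate $p\in D$ would contribute $\{a\in A:p(a)\}=A$, precisely what the guard $A'\subsetneq A$ in \textsc{Predecessors‑Strategy} discards, $\mathcal{FD}_{(A,D)}$ is to be read with $p$ ranging over $(\sigma(A)\cup\mathcal{C}[A])\setminus D$, and it remains to show that $\mathcal{FD}_{(A,D)}$ and $\mathcal{FP}^{P}_{(A,D)}$ have the same inclusion‑maximal members.

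First I would treat the easy inclusion: every inclusion‑maximal $X=\{a\in A:p(a)\}$ of $\mathcal{FD}_{(A,D)}$ is a member of $\mathcal{FP}^{P}_{(A,D)}$. Indeed its witnessing predicate $p$ survives in the list $L$ built by the first loop of \textsc{Predecessors‑Strategy}, so $p\in N$; if $p\in\sigma(A)$ it is then added to $P$ in the second loop, and if $p\in\mathcal{C}[A]$ it already lies in $P$ by the invariant $\mathcal{C}[A]\subseteq P$ (proved by induction on iterations: $\mathcal{C}$ is initially empty, and every constraint $b$ propagated by $\mathcal{C}[A']\leftarrow\mathcal{C}[A']\cup\mathcal{C}[A]\cup N\setminus D'$ lies in $\mathcal{C}[A]\cup N$ with $N\subseteq\sigma(A)\cup\mathcal{C}[A]$, hence is already in $P$ or is added to $P$ in the same call). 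Thus $p\in P\setminus D$ and $X\in\mathcal{FP}^{P}_{(A,D)}$.

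For the converse, let $X=\beta(q)\cap A$ be inclusion‑maximal in $\mathcal{FP}^{P}_{(A,D)}$, so $q\in P\setminus D$ and $\emptyset\neq X\subsetneq A$ (properness because $q\notin\alpha(A)=D$). The key claim is that every such $q$ lies in $\sigma(A)\cup\mathcal{C}[A]$, so that $X\in\mathcal{FD}_{(A,D)}$. Granting this, both directions together say that every set maximal in one family is a member of the other; since the two families are finite, a short domination argument then forces the inclusion‑maximal elements of $\mathcal{FD}_{(A,D)}$ and $\mathcal{FP}^{P}_{(A,D)}$ to coincide (if $X$ is maximal in $\mathcal{FD}_{(A,D)}$ and some $Y\in\mathcal{FP}^{P}_{(A,D)}$ strictly contained a copy of $X$, it would sit under a maximal element of $\mathcal{FP}^{P}_{(A,D)}$, which by the converse lies in $\mathcal{FD}_{(A,D)}$, contradicting maximality of $X$; and symmetrically), and this together with the Bordat bijection of the first paragraph finishes the proof.

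The hard part is exactly this key claim: the constraint map $\mathcal{C}$ together with $\sigma$ must capture, at concept $(A,D)$, every attribute of $P$ that still refines $A$ — the very invariant the propagation mechanism is designed to maintain, and the same phenomenon that makes meets appear in the proof of the second lemma. I would establish it by a downward induction along the priority (support) order. When $q$ first enters $P$, it is inclusion‑maximal at the ancestor concept $(A^{\ast},D^{\ast})$ then being processed, so $q\in N^{\ast}$ and $q$ is pushed into $\mathcal{C}[A']$ for every immediate predecessor $(A',D')$ of $(A^{\ast},D^{\ast})$ with $q\notin D'$. One then maintains the invariant ``for every not‑yet‑processed concept $(A,D)\le(A^{\ast},D^{\ast})$ with $q\notin D$ and $\beta(q)\cap A\subsetneq A$, one has $q\in\mathcal{C}[A]$'' across each subsequent \textsc{Predecessors‑Strategy} call, using the cross/residual update $\mathcal{C}[A']\leftarrow\mathcal{C}[A']\cup\mathcal{C}[A]\cup N\setminus D'$ and the fact that, concepts being dequeued by non‑increasing support, $(A,D)$ is processed only after all concepts above it. The fiddly points — $q$ legitimately dropping out when $\beta(q)\cap A=A$ or $q\in D'$, and $q$ possibly being re‑offered by $\sigma$ on an intermediate concept — are routine once the invariant is phrased carefully; the rest is standard FCA bookkeeping.
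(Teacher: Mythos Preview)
The paper does not supply a standalone proof: the lemma is stated immediately after Lemmas~1 and~2 with the sentence ``Therefore we can state the following result,'' treating it as a direct corollary of those two lemmas together with the dual Bordat theorem. Your proposal goes well beyond this, actually spelling out the reduction to Bordat on the generated context $\langle G,P,(\alpha_P,\beta_P)\rangle$ and isolating the non-trivial point, namely that the pruned candidate set $\sigma(A)\cup\mathcal{C}[A]$ already witnesses every inclusion-maximal extent that the full set $P\setminus D$ does. Your easy inclusion and the auxiliary invariant $\mathcal{C}[A]\subseteq P$ are correct.

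There is, however, a genuine gap in your converse direction. Your propagation invariant is stated only for concepts $(A,D)\le(A^{\ast},D^{\ast})$, i.e.\ below the concept at which $q$ first enters $P$, and indeed the update $\mathcal{C}[A']\leftarrow\mathcal{C}[A']\cup\mathcal{C}[A]\cup N\setminus D'$ pushes $q$ only down the generation DAG rooted at $(A^{\ast},D^{\ast})$. But your key claim must hold for \emph{every} later concept $(A,D)$ with $q\in P\setminus D$, including those on branches incomparable with $(A^{\ast},D^{\ast})$. Concretely, if the top concept has two incomparable predecessors $A_1,A_2$ and a fresh $q$ is introduced by $\sigma(A_1)$ while processing $A_1$, then $q$ is propagated only to predecessors of $A_1$; when $A_2$ is subsequently popped one has $q\in P\setminus D_2$ yet $q\notin\mathcal{C}[A_2]$ (and one may arrange $q\notin\sigma(A_2)$), so $\beta(q)\cap A_2$ can be an immediate-predecessor extent absent from $\mathcal{FD}_{(A_2,D_2)}$. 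The paper's own Lemma~2 proof glosses over exactly this issue (it asserts that an attribute is ``added as constraint to all other concepts'', which the algorithm does not literally do). To close the gap you would need either a global invariant that tracks $q$ across all branches of the generation DAG, or an argument that any such cross-branch extent is necessarily dominated by an element already present in $\mathcal{FD}_{(A,D)}$; neither is supplied here, nor in the paper.
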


\hypertarget{run-time-analysis}{%
\subsubsection{Run-time analysis}\label{run-time-analysis}}

Denote by \(\mathcal{B}\) the collection
of all formal concepts of \(\langle G,M,(\alpha,\beta)\rangle\) generated by the
strategy \(\sigma\), i.e.~the concepts of
\(\langle G,P,(\alpha_P,\beta_P)\rangle\); and by \(c_\sigma\) the cost of
the strategy for a concept.

\begin{itemize}
\tightlist
\item
  the \textsc{Predecessors‑Strategy} algorithm computes the predecessors of a
  concept. Its run-time complexity is in
  \(O(|G|\,|P|^2\,c_{\sigma})\) where
  \(O(|G|\,|P|^2)\) is the cost of Bordat's algorithm \citep{Bordat_1986}.
  And the descendant constraints are updated in \(O(|P|^2)\).
\item
  the \textsc{NextPriorityConcept‑Strategy} algorithm updates the priority
  queue in \(O(|G|\,|P|)\).
\end{itemize}

Therefore we can
deduce the run-time complexity of the \textsc{NextPriorityConcept‑Strategy}
algorithm: \(O(|{\cal B}|\,|G|\,|P|^2\,c_\sigma)\).

\hypertarget{memory-analysis}{%
\subsubsection{Memory analysis}\label{memory-analysis}}

At each step of the main loop in the \textsc{NextPriorityConcept‑Strategy}
algorithm, a set of predecessors is generated. The cardinality of these predecessors
cannot exceed \(|P|\). These predecessors will not be explored until
all their predecessors have been examined (principle of the priority queue using
the support of concepts). For each concept in the priority queue, a set of
constraints is maintained whose cardinality cannot exceed \(|P|\).
So the memory complexity of the \textsc{NextPriorityConcept‑Strategy}
algorithm is in \(O(w\,|P|^2)\) where \(w\) is the width of the
concept lattice.

\hypertarget{example-predicate}{%
\subsection{Example}\label{example-predicate}}

\begin{longtable}[]{@{}lrrrrr@{}}
\caption{\texttt{Digit} context where
\textbf{c} stands for \textbf{c}omposed,
\textbf{e} for \textbf{e}ven,
\textbf{o} for \textbf{o}dd,
\textbf{p} for \textbf{p}rime and
\textbf{s} for \textbf{s}quare \label{table:digit}}\tabularnewline
\toprule
& \textbf{c} & \textbf{e} & \textbf{o} & \textbf{p} & \textbf{s}\tabularnewline
\midrule
\endfirsthead
\toprule
& \textbf{c} & \textbf{e} & \textbf{o} & \textbf{p} & \textbf{s}\tabularnewline
\midrule
\endhead
0 & \checkmark & \checkmark & & & \checkmark\tabularnewline
1 & & & \checkmark & & \checkmark\tabularnewline
2 & & \checkmark & & \checkmark &\tabularnewline
3 & & & \checkmark & \checkmark &\tabularnewline
4 & \checkmark & \checkmark & & & \checkmark\tabularnewline
5 & & & \checkmark & \checkmark &\tabularnewline
6 & \checkmark & \checkmark & & &\tabularnewline
7 & & & \checkmark & \checkmark &\tabularnewline
8 & \checkmark & \checkmark & & &\tabularnewline
9 & \checkmark & & \checkmark & & \checkmark\tabularnewline
\bottomrule
\end{longtable}

\begin{longtable}[]{@{}rlllr@{}}
\caption{Execution \label{table:digit-execution}}\tabularnewline
\toprule
Step & \(\cal C\) & \(Q\) & \((A,P)\) & \(|A|\)\tabularnewline
\midrule
\endfirsthead
\toprule
Step & \(\cal C\) & \(Q\) & \((A,P)\) & \(|A|\)\tabularnewline
\midrule
\endhead
0 & \{0123456789:\(\emptyset\)\} & {[}(10,\$0){]} & (0123456789,\(\emptyset\)) & 10\tabularnewline
1 & \{04689:eo, 02468:co, 13579:ce\} & {[}(5,\$1), (5,\$2), (5,\$3){]} & (04689,c) & 5\tabularnewline
2 & \{02468:co, 13579:ce, 0468:o\} & {[}(5,\$2), (5,\$3), (4,\$4){]} & (02468,e) & 5\tabularnewline
3 & \{13579:ce, 0468:o\} & {[}(5,\$3), (4,\$4){]} & (13579,o) & 5\tabularnewline
4 & \{0468:o, 357:e, 9:e\} & {[}(4,\$4), (3,\$5), (1,\$7){]} & (0468,ce) & 4\tabularnewline
5 & \{357:e, 9:e, 04:o\} & {[}(3,\$5), (2,\$6), (1,\$7){]} & (357,op) & 3\tabularnewline
6 & \{9:e, 04:o, \(\emptyset\):ceops\} & {[}(2,\$6), (1,\$7), (0,\$8){]} & (04,ces) & 2\tabularnewline
7 & \{9:e, \(\emptyset\):ceops\} & {[}(1,\$7), (0,\$8){]} & (9,cos) & 1\tabularnewline
8 & \{\(\emptyset\):ceops\} & {[}(0,\$8){]} & (\(\emptyset\),ceops) & 0\tabularnewline
9 & \{\} & & {[}{]} &\tabularnewline
\bottomrule
\end{longtable}

\begin{longtable}[]{@{}lrrrrr@{}}
\caption{Context of the lattice with the maximal support strategy
where (\checkmark) stands for the relations that are not considered \label{table:digit-reduced}}\tabularnewline
\toprule
& \textbf{c} & \textbf{e} & \textbf{o} & \textbf{p\textbar o} & \textbf{s\textbar ec}\tabularnewline
\midrule
\endfirsthead
\toprule
& \textbf{c} & \textbf{e} & \textbf{o} & \textbf{p\textbar o} & \textbf{s\textbar ec}\tabularnewline
\midrule
\endhead
0 & \checkmark & \checkmark & & & \checkmark\tabularnewline
1 & & & \checkmark & & (\checkmark)\tabularnewline
2 & & \checkmark & & (\checkmark) &\tabularnewline
3 & & & \checkmark & \checkmark &\tabularnewline
4 & \checkmark & \checkmark & & & \checkmark\tabularnewline
5 & & & \checkmark & \checkmark &\tabularnewline
6 & \checkmark & \checkmark & & &\tabularnewline
7 & & & \checkmark & \checkmark &\tabularnewline
8 & \checkmark & \checkmark & & &\tabularnewline
9 & \checkmark & & \checkmark & & (\checkmark)\tabularnewline
\bottomrule
\end{longtable}

We consider the formal context \texttt{digit} in Table \ref{table:digit} as first example.
The maximal support strategy \(\sigma_{\mbox{max}}\) leads to the lattice
whose Hasse diagram is displayed in Figure
\ref{fig:greatest-support}, where attributes and objects are indicated in respectively the first and the last concept
where they appears. We also indicate the number (using \$) and the support (using \#)
of each concept.
The trace execution is in Table \ref{table:digit-execution}.
The result context \((G,P,I_P)\) displayed in Table \ref{table:digit-reduced}
is clearly a sub-context of the initial one.

We can observe that this second concept lattice contains only \(9\) concepts instead of \(14\).
The concepts for attributes \textbf{p} and \textbf{s} are not generated
as immediate predecessors of the top concept since their support is not maximal.
However, \textbf{p} appears in concept \$5, generated as a predecessor of concept \$3 equal to \((\{1,3,5,7,9\},o)\),
thus introduced only for the \texttt{odd} digits \(\{1,3,5,7,9\}\).
Therefore, \textbf{p} means \texttt{prime} property only for the \texttt{odd} digits, denoted \textbf{p\textbar o}.
In the same way, \textbf{s} appears in concept \$6 as a predecessor of concept \$4 equal to \((\{0,4,6,8\},ec)\),
meaning \texttt{square} property for the \texttt{even} and \texttt{composite} digits, denoted \textbf{p\textbar ec}.

The classical concept lattice produced without strategy is displayed in Figure \ref{fig:digits}.

\begin{figure}
\hypertarget{fig:greatest-support}{%
\centering
\includegraphics[width=0.4\textwidth,height=\textheight]{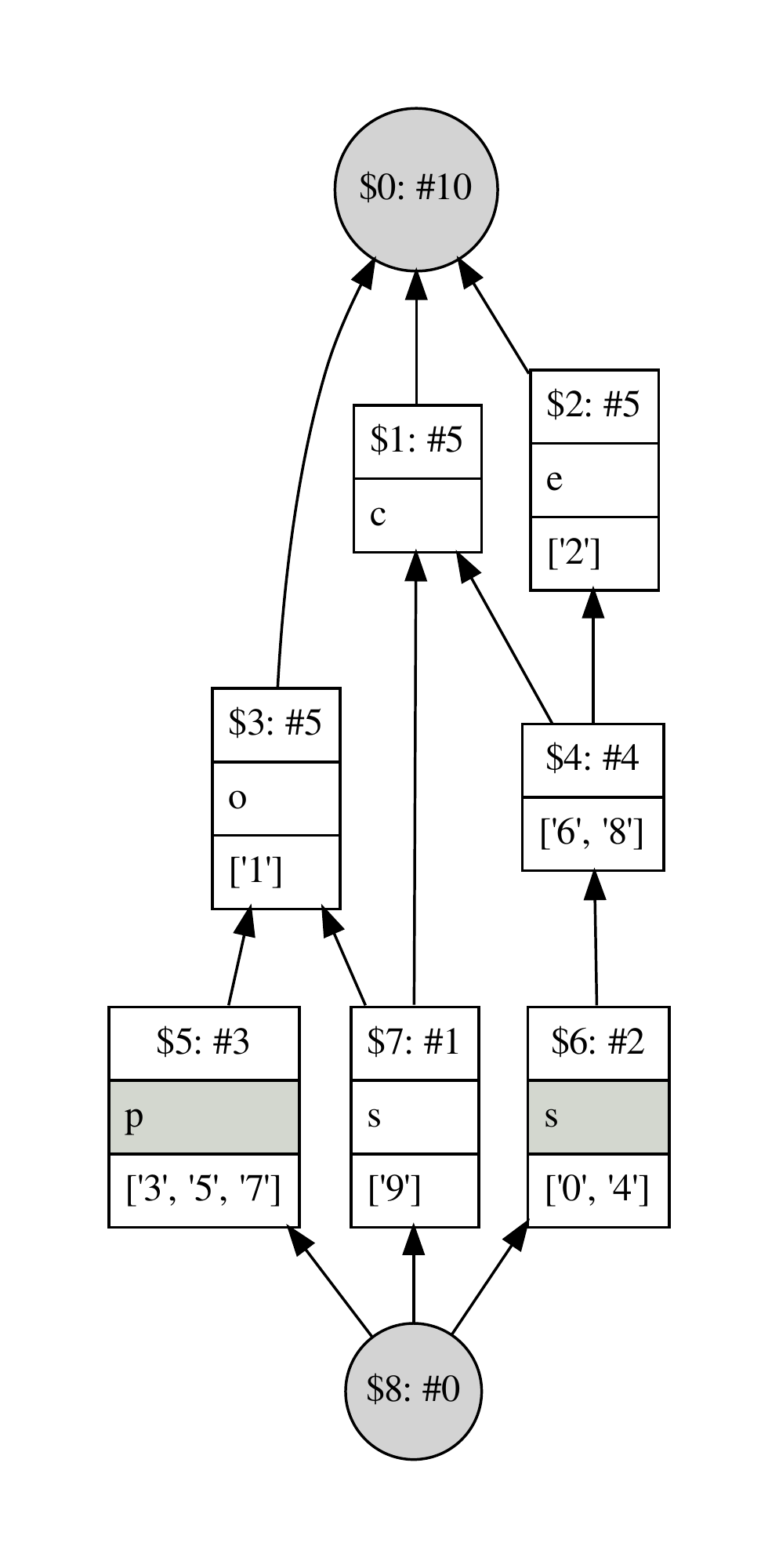}
\caption{\texttt{Digit} sample with greatest support strategy}\label{fig:greatest-support}
}
\end{figure}

\begin{figure}
\hypertarget{fig:digits}{%
\centering
\includegraphics[width=0.6\textwidth,height=\textheight]{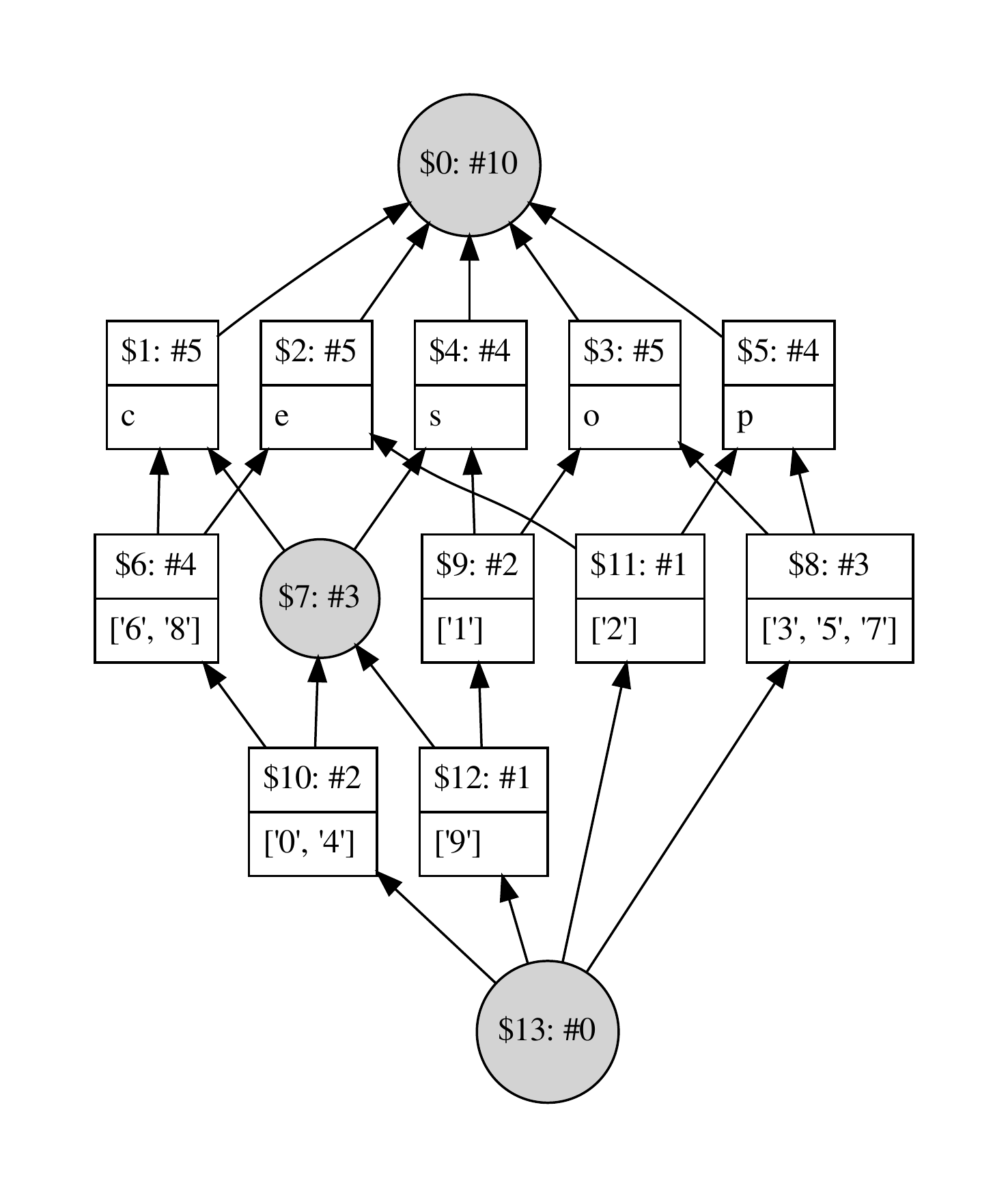}
\caption{\texttt{Digit} sample without strategy}\label{fig:digits}
}
\end{figure}

As second example, we consider the \texttt{Lenses} dataset from the UCI Machine Learning
Repository\footnote{\url{https://archive.ics.uci.edu}}. This dataset is composed of 24
objects/patients described by 4 categorical attributes:

\begin{itemize}
\tightlist
\item
  \emph{age of the patient}: young, pre-presbyopic, presbyopic
\item
  \emph{spectacle prescription}: myope, hypermetrope
\item
  \emph{astigmatic}: no, yes
\item
  \emph{tear production rate}: reduced, normal
\end{itemize}

and classified in 3 classes

\begin{itemize}
\tightlist
\item
  the patient should be fitted with \emph{hard contact lenses},
\item
  the patient should be fitted with \emph{soft contact lenses},
\item
  the patient should \emph{not be fitted} with contact lenses.
\end{itemize}

We consider the formal context composed of \(9\) binary attributes, i.e.~the modalities of the \(4\) categorical attributes.

The classical concept lattice contains 109 concepts.
With the entropy strategy \(\sigma_{\mbox{entropy}}\) using the class information
and by keeping only the two best entropy measures for the predecessors,
we obtain a more compact lattice of 28 concepts displayed in Figure~\ref{fig:lenses}.

As long as a concept contains only one class, no new predecessors are generated
by the strategy.
Therefore these concepts and their ascendants can be interpreted as a clustering
of the data, each concept \((A,D)\) among these clusters corresponding to a class
\(c\), and meaning that objects having attributes \(D\) belong to the class \(c\).

\begin{figure}
\hypertarget{fig:lenses}{%
\centering
\includegraphics{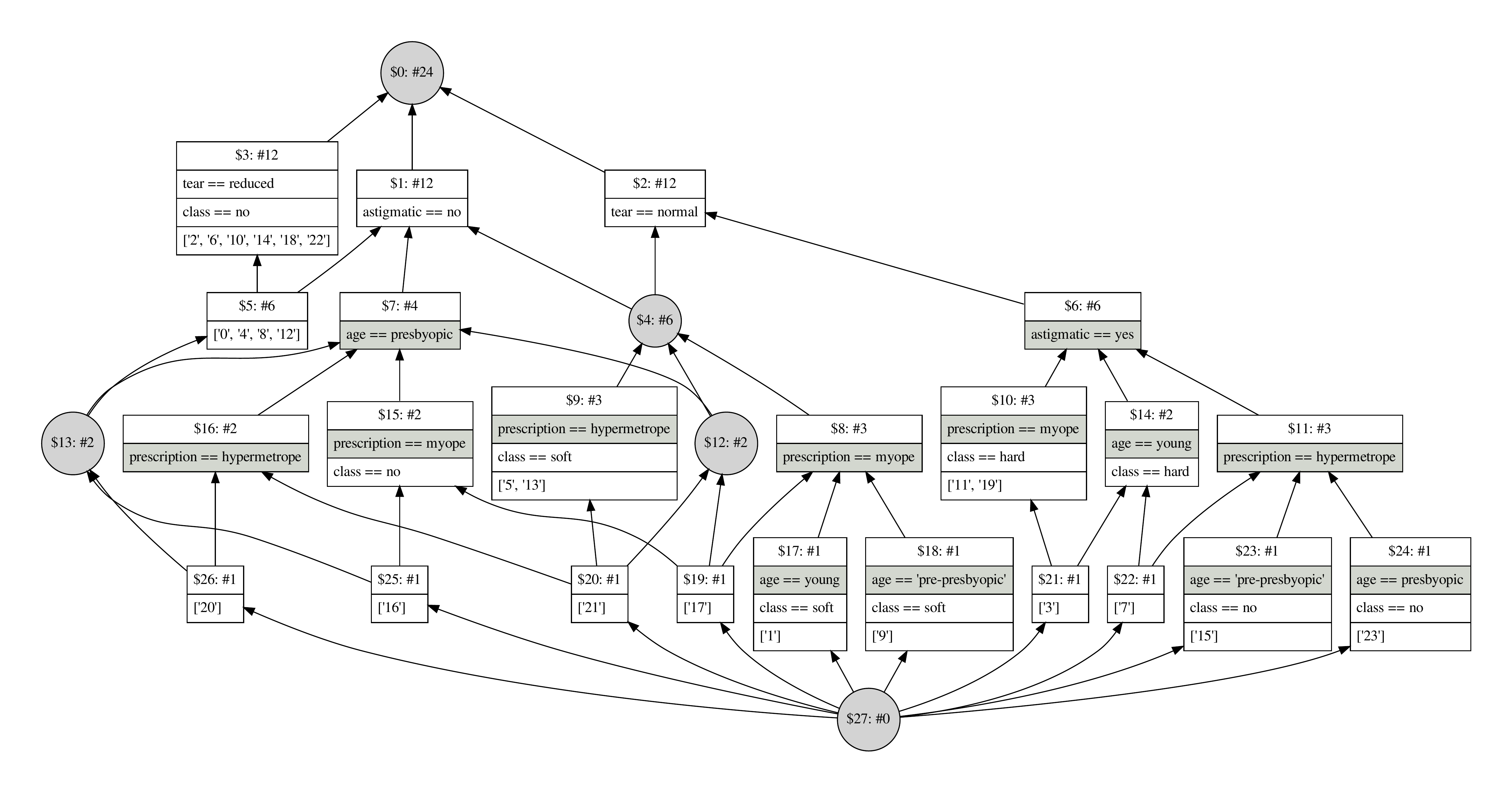}
\caption{\texttt{Lenses} dataset with the entropy strategy}\label{fig:lenses}
}
\end{figure}

\hypertarget{nextpriorityalgorithm-heterogeneous}{%
\section{Next Priority Concept: heterogeneous data as input}\label{nextpriorityalgorithm-heterogeneous}}

\hypertarget{nextpriorityalgorithm}{%
\subsection{Next Priority Concept algorithm: the final version}\label{nextpriorityalgorithm}}

In the previous section, the new set \(P\) of attributes is introduced to
store the selected attributes from which predecessors of a concept \((A,D)\) are
generated. It is easily observed that it is also possible to introduce new
attributes at each iteration without changes. For example, we can consider
\(\sigma_{\mbox{\small neg}}(A)=\{b, \overset{-}{b} \;:\; b \in M-\alpha(A)\}\),
a strategy adding negative attributes as selector.

Our final \textsc{NextPriorityConcept} algorithm exploits this possibility
to manage heterogeneous data as input.
We use predicates describing objects independent of their types.
In order to avoid confusion with classical binary attributes, we will use
characteristic instead of attribute.

More formally, we consider a heterogeneous dataset \((G,S)\) as input where each
\textbf{characteristic} \(s \in S\) can be seen as a mapping \(s: G \rightarrow R_s\) where
\(R_s\) is called the domain of \(s\). Let \(p_s\) be a \textbf{predicate} for a
characteristic \(s\) and \(a\in G\), we note \(p_s(a)\) when \(s(a)\) verifies \(p_s\).

For example, a numerical characteristic can be described by predicates on the
form \emph{is smaller/greater than \(c\)} where \(c\) a numerical value; a
characteristic representing a (temporal) sequence can be described by
predicates of the form \emph{contains \(s\) as (maximal) subsequence} where \(s\)
is a sequence; and a classical boolean characteristic \(b\) corresponds to the
predicate \emph{possesses \(b\)}.

We introduce the notion of \emph{description} \(\delta\) to provide predicates describing
a set of objects, and we extend the notion of strategy \(\sigma\) to provide
predicates (called selectors) to generate (select) the predecessors of a concept.

Characteristics of different domains must be processed separately since
predicates are calculated differently according to the domain.
However, characteristics on the same domain can be processed together or
separately, and some characteristics may not be considered, or considered several times.\\
Therefore, characteristics are given by a family \(S=(S^i)_{i\leq d}\), where each \(S^i\) contains characteristics on the same domain.

For example, for the well-known Iris database\footnote{\url{https://archive.ics.uci.edu}} composed of class information
and four numerical characteristics
\(S=\{\mbox{\texttt{sepal-length}}, \mbox{\texttt{sepal-width}}, \mbox{\texttt{petal-length}}, \mbox{\texttt{petal-width}}\}\),
we can consider the \texttt{petal} characteristics together, and the \texttt{sepal}
characteristics together
(\(S^1=\{\mbox{\texttt{petal-length}}, \mbox{\texttt{petal-width}}\}\) and
\(S^2=\{\mbox{\texttt{sepal-length}}, \mbox{\texttt{setal-width}}\}\)).
We can also only consider the petal characteristics, and separately
(\(S^1=\{\mbox{\texttt{petal-length}}\}\) and \(S^2=\{\mbox{\texttt{petal-width}}\}\)).

Predicates are provided according to a given \(S^i\), both to describe a set of
objects, but also to compute the predecessors of a concept:

\begin{description}
\item[A description \(\delta^i\)]
is an application \(\delta^i:2^G\rightarrow 2^P\) which defines a set
of predicates \(\delta^i(A)\) describing the characteristics of \(S^i\) for any
subset \(A\) of \(G\).
\item[A strategy \(\sigma^i\)]
is an application \(\sigma^i:2^G\rightarrow 2^P\) which defines a set
of predicates \(\sigma^i(A)\) (called selectors) for characteristics of \(S^i\) from which the
predecessors of a concept \((A,D)\) are generated.
\end{description}

Therefore, the strategy \(\sigma(A)\) and the description \(\delta(A)\) of a subset
\(A\) of objects are defined from \(2^G\) to \(2^P\) by:
\[\delta(A) = \bigcup_{S^i}\delta^i(A)\]
\[\sigma(A) = \bigcup_{S^i}\sigma^i(A)\]

Let us give some examples of strategies and descriptions for a set \(A\subseteq G\) of objects:

\begin{itemize}
\tightlist
\item
  For a numerical attribute \(S^i=\{s\}\):

  \begin{itemize}
  \tightlist
  \item
    \(\delta^i(A)=\{ \mbox{\emph{is greater than }} \min_{a\in A}s(a), \mbox{\emph{is smaller than }} \max_{a\in A}s(a)\}\)
  \item
    \(\sigma^i(A)=\{ \mbox{\emph{is greater than }} q_1, \mbox{\emph{is smaller than }} q_3\}\)
    where \(q_1\) and \(q_3\) are respectively the first and the third quantile of the values \((s(a))_{a\in A}\).
  \end{itemize}
\item
  For a sequential attribute \(S^i=\{s\}\):

  \begin{itemize}
  \tightlist
  \item
    \(\delta^i(A)=\{\mbox{\emph{ contains }} X \mbox{\emph{ as subsequences}}\}\) where \(X\) is the set
    of longest most common subsequences of \(\{s(a)\}_{a\in A}\).
  \item
    \(\sigma^i(A)=\{\mbox{\emph{ contains }} X' \mbox{\emph{as subsequences}}\}\) where sequences of
    \(X'\) are obtained by a reduction process of the longest most common
    subsequences of \(\delta^i(A)\).
  \end{itemize}
\end{itemize}

\begin{algorithm}[h]
\label{next-priority-concept}
\SetKw{KwNot}{not}
\SetKw{KwEmpty}{empty}
\SetKw{KwDelete}{delete}
\SetKw{KwProduce}{produce}
\KwData{
\begin{itemize}
\item $\langle G,S\rangle$ a dataset
\item $(S^i)_{i\leq d}$ a family of $S$
\item $\delta$ a description
\item $\sigma$ a strategy
\end{itemize}
}
\KwOut{
\begin{itemize}
\item the formal context $\langle G,P,I_P\rangle$
\item its concepts $(A,D)$
\end{itemize}
}
\Begin{
    \texttt{/* Priority queue for the concepts */}\\ 
    $Q \leftarrow \mbox{\texttt{[]}}$ \tcc*{$Q$ is a priority queue using the support of concepts}
    $Q$.push($(|G|,(G,\delta(G))$) \tcc*{Add the top concept into $Q$}
    \texttt{}\\ 

    \texttt{/* Data structure for constraints */}\\ 
    ${\cal C} \leftarrow \mbox{\texttt{[]}} $ \tcc*{${\cal C}$ is the descendant constraints map being $\emptyset$ by default}

    \texttt{/* Data structures for predicates */}\\ 
    $P \leftarrow \emptyset$    \tcc*{$P$ is the set of all predicates} 
    $I_P \leftarrow \emptyset$ \tcc*{$I_P$ is the binary relation between $G$ and $P$}

    \texttt{/* Immediate predecessors generation */}\\ 
    \While{$Q$ \KwNot \KwEmpty}{
        $(A,D) \leftarrow Q.\mbox{pop()}$ \tcc*{Get the concept with highest support}
        \KwProduce $(A,D)$ \;
        $LP \leftarrow$ \textsc{Predecessors‑Desc}($(A,D)$, $P$, $I_P$, ${\cal C}$, $\sigma$, $\delta$) \;
        
        \texttt{/* Update queue */}\\
        \ForAll{$(A',D') \in LP$}{
            \If{$(A',D')\not\in Q$}{
            $Q$.push($(|A'|,(A',D'))$) \tcc*{Add concept into $Q$}
        }}
        \KwDelete ${\cal C}[A]$ \tcc*{Remove useless data}
    }
    \Return {$\langle G,P,I_P\rangle$}
    }
\caption{\textsc{NextPriorityConcept}}
\end{algorithm}

\begin{center}\rule{0.5\linewidth}{\linethickness}\end{center}

\textsc{NextPriorityConcept} and \textsc{Predecessors‑Description}
algorithms are very similar to the previous versions.
\textsc{NextPriorityConcept} algorithm considers as input

\begin{itemize}
\tightlist
\item
  a heterogeneous dataset \((G,S)\)
\item
  \((S^i)_{i\leq d}\) a family of \(S\)
\item
  \(\delta\) a description
\item
  \(\sigma\) a strategy
\end{itemize}

This algorithm computes the formal
context \(\langle G,P,I_P\rangle\) and its concepts, where \(P\) is
the set of predicates describing the characteristics, \(I_P=\{(a,p) \;:\; p(a)\}\)
is the relation between objects and predicates, and \((\alpha_P,\beta_P)\) is
the associated Galois connection.

\textsc{Predecessors‑Description} is a modified version of
\textsc{Predecessors‑Strategy} to compute predecessors of a concept \((A,D)\),
where we consider \(\sigma(A)\) and \(\delta(A)\) given as input.

\begin{algorithm}[h]
\label{predecessors-description}
\SetKw{KwTrue}{true}
\SetKw{KwFalse}{false}
\SetKw{KwBreak}{break}
\KwData{
\begin{itemize}
\item $(A,D)$ a concept
\item $P$ the set of predicates
\item $I_P$ the binary relation between $G$ and $P$
\item ${\cal C}$ the constraints
\item $\sigma$ a strategy (issued from the $\sigma^i$)
\item $\delta$ a description (issued from the $\delta^i$)
\end{itemize}
}
\KwResult{
\begin{itemize}
\item $LP$ a set of predecessors
\end{itemize}
}
\Begin{
    $L \leftarrow \emptyset$\;
    
    \ForAll{$p \in (\sigma(A) \cup {\cal C}[A]) \setminus D$}{
                        \texttt{/* p is a new ``potential'' selector to generate a predecessor */}\\ 
                $A'\leftarrow \{a\in A\;:\; p(a)\}$ \tcc*{$A'$ are the objects verifying $D+p$}
                \texttt{/* Add $(A',p)$ if $A'$ maximum in $L$ and included in $A$ */}\\
                \lIf{$A'\subset A$}{$L \leftarrow$ \textsc{Inclusion‑Max}($L$,$(A',p)$)}
        }
        
    $N\leftarrow \{p\;:\; (A',p)\in L\}$ \tcc*{$N$ is the set of new constraints}

$LP \leftarrow \emptyset$\;
        
\ForAll{$(A',p')\in L$}{
        \texttt{/* Update the selected attributes $P$ and the relation $I_P$*/}\\
        \If{$p'\in \sigma(A)$} {
        $P \leftarrow P \cup \{p'\}$ \tcc*{Update the set of selected predicates}
        }
        $D'\leftarrow \delta(A')$ \tcc*{$D'$ are the new predicates describing $A'$}
        $LP$.add($(A',D')$) \tcc*{$(A',D')$ is a new concept}
        $I_P \leftarrow I_P \cup (A'\times D')$ \tcc*{Update the new relation}

        \texttt{/* Compute cross constraints ($X$) and propagate constraints */}\\
        $X\leftarrow \{p''\in N\;:\; p''(a)\;\forall a \in A'\}$\\
        ${\cal C}[A'] \leftarrow {\cal C}[A'] \cup {\cal C}[A] \cup N \setminus X$\\
        
        }

    \Return{$LP$}
}
\caption{\textsc{Predecessors‑Desc}}
\end{algorithm}

\hypertarget{discussions}{%
\subsection{Discussions}\label{discussions}}

\hypertarget{comparison-with-pattern-structures}{%
\subsubsection{Comparison with pattern structures}\label{comparison-with-pattern-structures}}

The predicates in the final set \(P\) are those issued from the
descriptions since the predicates generated by the strategy are only used to
generate predecessors. Our \textsc{NextPriorityConcept} algorithm can be
interpreted as a pattern structure on each domain of characteristics \(S^i\).

Formally, a pattern structure \citep{Kuznetsov_2001} is a triple
\((G,({\cal D},\sqcap),\delta)\) where \(G\) is a set of objects,
\(({\cal D},\sqcap)\) is a meet semi-lattice of potential objects descriptions,
and \(\delta: G \rightarrow {\cal D}\) associates to each object its
description. Elements of \({\cal D}\) are ordered with the subsumption
relation \(\sqsubseteq\).

Let \((2^G,{\subseteq})\galois{\alpha_{\cal D}}{\beta_{\cal D}}({\cal D},\sqcap)\)
be the corresponding Galois connection where:

\begin{itemize}
\tightlist
\item
  \(\alpha_{\cal D}: 2^G\rightarrow {\cal D}\) is defined for
  \(A\subseteq G\) by \(\alpha_{\cal D}(A)= \sqcap_{g\in A} \delta(g)\).
\item
  \(\beta_{\cal D}: {\cal D}\rightarrow 2^G\) is defined, for
  \(d\in {\cal D}\) by
  \(\beta_{\cal D}(d)= \{g\in G\;:\; d\sqsubseteq \delta_{\cal D}(d)\}\).
\end{itemize}

Pattern concepts are pairs \((A,d)\), \(A\subseteq G\), \(d\in {\cal D}\) such that
\(\alpha_{\cal D}(A)=d\) and \(A=\beta_{\cal D}(d)\). \(d\) is a pattern intent, and
is the common description of all objects in \(A\). When partially ordered by
\((A_1,d_1)\leq (A_2,d_2) \iff A_1\subseteq A_2 (\iff d_2 \sqsubseteq d_1)\),
the set of all pattern concepts forms a lattice called the \emph{pattern lattice}.

In our case, we have an implicit description space \({\cal D}^i\) by \(\delta^i\)
for each subset \(S^i\) of characteristics, where the description \(\delta^i(A)\)
of a set \(A\subseteq G\) is a direct translation by predicates of its description
in \({\cal D}^i\). A nice result in pattern structure establishes that there is a
Galois connection between \(G\) and \({\cal D}^i\)
if and only if \(({\cal D}^i,\sqcap)\) is a meet semi-lattice.

In order to maintain the final Galois connection \((\alpha_P,\beta_P)\)
between objects and predicates, each description \(\delta^i(A)\) must verify
\(\delta^i(A)\sqsubseteq\delta^i(A')\) for \(A'\subseteq A\) since
\(\delta^i(A)\sqsubseteq\delta^i(A') \iff \delta^i(A)\sqcap\delta^i(A') = A\).
Therefore, we can state the following results as a corollary of
Theorem~\ref{th:algo}:

\hypertarget{th:algo}{}
\begin{corollary} \label{th:algo}

If each description \(\delta^i\) verifies \(\delta^i(A)\sqsubseteq\delta^i(A')\) for
\(A'\subseteq A\), then \textsc{NextPriorityConcept} algorithm computes the concept
lattice of \(\langle G,P,(\alpha_P,\beta_P)\rangle\) with a run-time in
\(O(|{\cal B}|\,|G|\,|P|^2\,(c_{\sigma}+c_{\delta}))\) (where \({\cal B}\)
is the number of concepts, \(c_{\sigma}\) is the cost of the strategy and \(c_{\delta}\) is
the cost of the description,
and a space memory in \(O(w\,|P|^2)\) (where
\(w\) is the width of the concept lattice).

\end{corollary}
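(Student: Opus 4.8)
The plan is to obtain the corollary as a direct transcription of Theorem~\ref{th:algo}, viewing \textsc{NextPriorityConcept} as the instance of \textsc{NextPriorityConcept‑Strategy} in which the growing attribute set is the set $P$ of predicates and the role of the derivation $b\mapsto\beta(b)$ is played by $p\mapsto\{a\in G:p(a)\}$. First I would fix the output context $\langle G,P,I_P\rangle$ with $I_P=\{(a,p):p(a)\}$ and note that, by the construction of $P$ inside \textsc{Predecessors‑Desc}, every predicate that ever enters $P$ comes either from a description $\delta^i(A)$ or is promoted from $\sigma(A)$, and in both cases $\beta_P(\{p\})=\{a\in G:p(a)\}$; hence the line $A'\leftarrow\{a\in A:p(a)\}$ of \textsc{Predecessors‑Desc} computes exactly $\beta_P(\{p\})\cap A$, just as $A'\leftarrow\beta(b)\cap A$ did in \textsc{Predecessors‑Strategy}.

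The substantive step is to check that the hypothesis ``$\delta^i(A)\sqsubseteq\delta^i(A')$ for $A'\subseteq A$'' is precisely what makes the predicate-valued descriptions compatible with the Galois connection $(\alpha_P,\beta_P)$. I would argue that this monotonicity yields, for every $A\subseteq G$, the inclusion $\delta(A)\subseteq\alpha_P(A)$ (each describing predicate holds on all of $A$) together with the property that the intent of a generated predecessor, recomputed as $D'\leftarrow\delta(A')$, satisfies $D'=\alpha_P(A')\cap P_j$ at the iteration $j$ at which $(A',D')$ is produced --- the same invariant ``$D_i\subseteq P_i$ and $(A_i\times D_i)\subseteq I_P$'' used in the proof of the first lemma of Theorem~\ref{th:algo}. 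In the same vein, the cross-constraint set $X=\{p''\in N: p''(a)\ \forall a\in A'\}$ of \textsc{Predecessors‑Desc} is exactly $N\cap D'$ once $D'=\delta(A')$ is seen to contain every constraint of $N$ satisfied throughout $A'$, so the propagation rule ${\cal C}[A']\leftarrow{\cal C}[A']\cup{\cal C}[A]\cup N\setminus X$ is the literal analogue of ${\cal C}[A']\leftarrow{\cal C}[A']\cup{\cal C}[A]\cup N\setminus D'$.

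With these two identifications the two lemmas underlying Theorem~\ref{th:algo} transfer verbatim: every produced pair $(A_i,D_i)$ is a concept of $\langle G,P,(\alpha_P,\beta_P)\rangle$, and every concept of that context, being the meet of the single-predicate concepts $\{(\beta_P(p),\alpha_P\circ\beta_P(p))\}_{p\in D}$, is produced thanks to the constraint-propagation mechanism --- which establishes the correctness claim. For the run-time, \textsc{Predecessors‑Desc} performs the same $O(|G|\,|P|^2)$ Bordat-style work and the same $O(|P|^2)$ constraint update as \textsc{Predecessors‑Strategy}, and additionally evaluates $\sigma$ once (cost $c_\sigma$) and the description $\delta$ once per generated predecessor (cost $c_\delta$), so the cost per processed concept is $O(|G|\,|P|^2(c_\sigma+c_\delta))$; the priority-queue update is again $O(|G|\,|P|)$; multiplying by $|{\cal B}|$ gives $O(|{\cal B}|\,|G|\,|P|^2(c_\sigma+c_\delta))$. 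The memory bound is unchanged from the strategy case: at most $w$ concepts (the width of the lattice) coexist in $Q$, each carrying a constraint set and an extent/intent of size $O(|P|)$, hence $O(w\,|P|^2)$.

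I expect the only real obstacle to be the second step: making rigorous that the monotonicity hypothesis on each $\delta^i$ is equivalent to the compatibility of the description with a Galois connection on $\langle G,P,I_P\rangle$, so that the invariant on $D'$ survives and the lemmas of Theorem~\ref{th:algo} can be invoked. Once that is settled, the remainder is a routine rewriting of the strategy-version proof together with a bookkeeping count of the extra $c_\delta$ factor.
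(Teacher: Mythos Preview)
Your proposal is correct and follows essentially the same approach as the paper: the paper states this result as an immediate corollary of Theorem~\ref{th:algo}, relying on the preceding pattern-structure discussion to justify that the monotonicity hypothesis on the $\delta^i$ guarantees the Galois connection $(\alpha_P,\beta_P)$, after which the strategy-version theorem and its complexity analysis carry over with the extra $c_\delta$ term. You supply considerably more detail than the paper does---in particular the explicit identification of $A'\leftarrow\{a\in A:p(a)\}$ with $\beta_P(\{p\})\cap A$, the matching of the constraint-propagation rules, and the honest flag that the invariant $D'=\alpha_P(A')\cap P_j$ is the step requiring the hypothesis---but the skeleton is the same.
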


\hypertarget{th:algo}{}
\begin{corollary} \label{th:algo}

If each description \(\delta^i\) verifies \(\delta^i(A)\sqsubseteq\delta^i(A')\) for
\(A'\subseteq A\), then

\begin{center}
$(2^G,{\subseteq})\galois{\delta}{\pi}(2^P,\sqsubseteq)$ is a Galois connection 
\end{center}

where \(\pi: 2^P\rightarrow 2^G\) is defined, for
\(D\subseteq P\) by
\(\pi(D)= \{a\in G\;:\; p(a) \;\forall p\in D\}\).
And \(\pi\circ\delta\) is a closure operator on \(P\).

\end{corollary}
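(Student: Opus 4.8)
The plan is to show directly that the pair $(\delta,\pi)$ satisfies the defining adjunction property of a Galois connection between the ordered sets $(2^G,\subseteq)$ and $(2^P,\sqsubseteq)$, namely that for all $A\subseteq G$ and $D\subseteq P$,
\[
D\sqsubseteq\delta(A) \iff A\subseteq\pi(D),
\]
together with the two monotonicity (antitonicity) statements that are anyway consequences once both composites are shown to be increasing. Here $\sqsubseteq$ on $2^P$ is the subsumption order inherited from the description spaces $\mathcal D^i$: $D\sqsubseteq D'$ means that the common description encoded by $D'$ refines (is subsumed by) the one encoded by $D$, equivalently every object satisfying all predicates of $D'$ also satisfies all predicates of $D$, i.e. $\pi(D')\subseteq\pi(D)$. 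I would open by making this translation explicit, since it turns the subsumption order into a concrete statement about the sets $\pi(D)$, after which the whole argument becomes elementary set manipulation.

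First I would record the two easy halves of the equivalence. For the direction $A\subseteq\pi(D)\Rightarrow D\sqsubseteq\delta(A)$: if every object of $A$ satisfies all predicates of $D$, then each $p\in D$ holds on all of $A$; using the hypothesis $\delta^i(A)\sqsubseteq\delta^i(A')$ for $A'\subseteq A$ applied with the relevant $S^i$, together with the fact that $\delta(A)$ is by definition the description of $A$ obtained from the $\delta^i$, one gets that the description $\delta(A)$ is subsumed by the description carried by $D$, i.e. $D\sqsubseteq\delta(A)$. The key input here is precisely the monotonicity condition $\delta^i(A)\sqsubseteq\delta^i(A')$ for $A'\subseteq A$: it guarantees $\delta$ is antitone, so that a smaller object set gets a more general (smaller under $\sqsubseteq$, in the stated convention) description, which is what makes the implication go through. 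For the converse $D\sqsubseteq\delta(A)\Rightarrow A\subseteq\pi(D)$: it suffices to observe that $A\subseteq\pi(\delta(A))$ always holds — every object of $A$ satisfies, by definition of $\delta(A)$, all predicates describing $A$ — and then apply antitonicity of $\pi$ (which is immediate from its definition: more predicates means fewer objects) to $D\sqsubseteq\delta(A)$ to get $\pi(\delta(A))\subseteq\pi(D)$, hence $A\subseteq\pi(D)$.

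Having the adjunction, the standard lattice-theoretic consequences follow formally: $\delta$ and $\pi$ are antitone, $\pi\circ\delta$ and $\delta\circ\pi$ are extensive and idempotent, and in particular $\pi\circ\delta$ is a closure operator on $2^G$; but the corollary asks for the closure operator \emph{on $P$}, i.e. $\delta\circ\pi$, so I would state that $\delta\circ\pi$ is the associated closure operator on $2^P$ (monotone for $\sqsubseteq$, extensive, idempotent) and note this is the general fact that the composite $g\circ f$ coming from a Galois connection $f\dashv g$ is a closure on the codomain of $g$. I would also remark that the hypothesis $\delta^i(A)\sqsubseteq\delta^i(A')$ for $A'\subseteq A$ is equivalent, via the identity displayed in the surrounding text ($\delta^i(A)\sqsubseteq\delta^i(A')\iff\delta^i(A)\sqcap\delta^i(A')=\delta^i(A)$ together with $\beta_{\mathcal D^i}$), to the fact that each $(\mathcal D^i,\sqcap)$ behaves as a meet-semilattice compatibly with $\delta^i$, which is exactly the pattern-structure condition invoked just before the corollary.

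The main obstacle is bookkeeping rather than mathematical depth: one must be careful that $\delta(A)=\bigcup_{S^i}\delta^i(A)$ is a union of predicate sets coming from heterogeneous domains, and that the order $\sqsubseteq$ on $2^P$ is defined componentwise through the several $\mathcal D^i$; so I would verify the equivalence domain by domain and then glue, checking that a union of subsumptions is a subsumption of unions — this is where the disjointness (distinct domains processed separately) is used, and it is the one place where a sloppy argument could hide a gap. Once the per-domain statement is in hand, assembling the global Galois connection and reading off that $\delta\circ\pi$ is a closure operator is routine.
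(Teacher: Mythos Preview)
The paper does not give a proof of this corollary; it is stated as an immediate consequence of the pattern-structure result recalled in the paragraph just before (a pattern structure $(G,(\mathcal D,\sqcap),\delta)$ induces a Galois connection exactly when $(\mathcal D,\sqcap)$ is a meet semi-lattice), applied componentwise to each $\mathcal D^i$ and then assembled. So your plan---verifying the adjunction $D\sqsubseteq\delta(A)\iff A\subseteq\pi(D)$ directly---is necessarily a different route from the paper's, which simply defers to the cited framework. Your observation that the composite written in the statement is off ($\pi\circ\delta$ lands in $2^G$, so the closure ``on $P$'' should be $\delta\circ\pi$) is a fair catch.

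There is, however, a real gap in your argument for the direction $A\subseteq\pi(D)\Rightarrow D\sqsubseteq\delta(A)$. From $A\subseteq\pi(D)$ and the antitonicity of $\delta$ (which is exactly the hypothesis) you obtain $\delta(\pi(D))\sqsubseteq\delta(A)$; to finish you still need $D\sqsubseteq\delta(\pi(D))$, i.e.\ extensivity on the predicate side. This is \emph{not} automatic for an arbitrary $D\subseteq P$: it asserts that the description of the objects satisfying $D$ already entails $D$, and that fails whenever $P$ contains a predicate not recoverable from $\delta$ (think of a parity predicate when $\delta$ only emits min/max bounds---then $\pi(\delta(\pi(D)))\not\subseteq\pi(D)$). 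Your operational reading of $\sqsubseteq$ via $\pi$ does not rescue the step; it just rewrites it as the same non-trivial inclusion. What makes it go through in the paper's setting is precisely the pattern-structure assumption: each $(\mathcal D^i,\sqcap)$ is a meet semi-lattice and the predicates in $P$ arise from the $\delta^i$, so every $p\in P$ is subsumed by the description of its own extent. You should either make that step explicit (proving $D\sqsubseteq\delta(\pi(D))$ from the semi-lattice structure of the $\mathcal D^i$), or argue, as the paper implicitly does, per domain via the known pattern-structure Galois connection and then take the product over the $S^i$.
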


While patterns are globally computed in a preprocessing step using pattern
structures, our \textsc{NextPriorityConcept} algorithm is a pattern discovery approach
where predicates are discovered ``on the fly'', in a local way for each concept.
This is made possible
by the use of the priority queue (to ensure that each concept is generated
before its predecessors) and the propagation of constraints (to ensure that meet
will be computed). Therefore, predicates are well-suited to the data, and
lattices are often smaller, with more relevant concepts.
Moreover, the use of predicates mixed with specialized strategies and
descriptions on each domain of characteristics allows mining of
complex and heterogeneous data.

\hypertarget{processing-of-group-of-characteristics}{%
\subsubsection{Processing of group of characteristics}\label{processing-of-group-of-characteristics}}

When some characteristics are defined on the same domain, the family
\((S^i)_{i\leq d}\) offers the possibility to process them separately or together.
An immediate way to process with several characteristics together would be to
merge the predicates obtained in the individual case, both for the descriptions
and for the strategies. But it is possible to obtain more relevant predicates
by a specific process of a group of characteristics.

For example, for a group of \(k\) numerical characteristics \(s_1,\ldots s_k\), we
can consider the \(k\)-dimensional points \(\{ (s_j(a))_{j\leq k}\;:\; a\in A\}\)
for a set \(A\) of objects, and their convex hull \citep{belfodil:hal-01573841}. The description \(\delta^i(A)\)
is then composed of predicates describing the borders of the convex hull, and
the strategy \(\sigma^i(A)\) is a way to cut the hull. For points in two
dimensions, the convex hull is a polygon, and borders and cuts are lines.
Clearly, for two sets \(A\) and \(A'\) of objects such that \(A'\subseteq A\), the
convex hull of \(A'\) is included into the convex hull of \(A\), and the
intersection of two convex hulls is a convex hull.
Therefore \(\delta^i(A)\sqsubseteq\delta^i(A')\)

For points in two and three dimensions, output-sensitive algorithms are known
to compute the convex hull in time \(O(n \log n)\), where \(n\) is the number of points.
For dimensions \(d\) higher
than \(3\), the time for computing the convex hull is
\(O(n^{\lfloor d/2\rfloor })\) \citep{ConvexHull_1986}. This process therefore
impacts on the costs \(c_{\sigma}\) and \(c_{\delta}\).

Now consider a group of \(k\) boolean characteristics \(S^i=\{x_1,\ldots,x_k\}\).
The classical FCA approach describes a set of objects \(A\) by the set of
attributes \(B=\{x_j\;:\; a\in A \mbox{ and }x_j(a)=1\}\) and the strategy of
generation of immediate predecessors considers the set of all other attributes
\(\{x\in S^i\setminus B\}\) as selectors.
These two sets described by predicates of the form
\emph{possesses attribute} \(x\) would respectively corresponds to
\(\delta^i(A)\) and \(\sigma^i(A)\).

The use of predicates, and especially the possibility of introducing negative attributes,
allows us to consider other descriptions of \(A\).
For example, we can consider a description \(\delta^i(A)\) by predicates for the disjunction of clauses:

\[
\bigvee_{a\in A}\bigwedge_{j\leq k}
\left\{
  \begin{array}{rr}
    x_j & \mbox{ if } x_j(a)=1 \\
    \overline{x_j} & \mbox{ if } x_j(a)=0 \\
  \end{array}
\right.
\]

For a finer and minimal description, we can also consider the minimization of
this boolean formulae using the well-known Quine-McCluskey algorithm (or the
method of prime implicants), with a time complexity in \(O(3^n\log n)\)
\citep{Quine_1952} where \(n\) is the number of attributes.

\hypertarget{about-strategies}{%
\subsubsection{About strategies}\label{about-strategies}}

A strategy proposes a way to \emph{cut} the description \(\delta^i(A)\) by selectors
from which predecessors of a concept \((A,P)\) are generated.
These selected predicates are only used in this way at each step of the algorithm,
but are not kept in the final set \(P\) of predicates, and several strategies are possible to generate predecessors of a concept \((A,P)\).

Therefore, our algorithm can be extended to improve the strategy management:

\begin{description}
\item[Meta-strategy:]
The strategy \(\sigma\) is defined as the union of the strategies
\((\sigma^i)_{i\leq d}\) for each part \(S^i\) of attributes. It is possible to
introduce a filter (or meta-strategy) on these selectors, as those
introduced in the section \ref{nextpriorityalgorithm-strategy}:

\begin{itemize}
\item
  The maximal support meta-strategy relies on the support:

  \[\sigma_{\mbox{\small max}}(A)=\{p\in \bigcup_{S^i}\delta^i(A)\;:\;
    |\pi(p)|\mbox{ maximal}\}\]
\item
  The entropy meta-strategy is a supervised strategy where objects have a
  class attribute:

  \[\sigma_{\mbox{\small entropy}}(A)=\{p\in \bigcup_{S^i}\delta^i(A)\;:\;
    H_{\mbox{class}}(\pi(p))\mbox{ minimal}\}\]
\end{itemize}
\item[Interactivity:]
Several strategies are possible to generate predecessors of a concept, going from the naive strategy
\(\sigma^i_{\mbox{naive}}\) that generates all the possible predecessors, to the
silly strategy \(\sigma^i_{\mbox{silly}}=\emptyset\) that generates no predecessors.
Therefore we can extend our algorithm in an
interactive way, where the user could choose or test several strategies
for each concept in an user driven pattern discovery approach.

In classical FCA approach, the naive strategy considers all the possible attributes
of \(M\setminus B\) for a concept \((A,B)\), and the corresponding lattice is often too large.
The silly strategy allows to introduce some attributes in concepts, but without
considering them in the predecessor generation. This approach is interesting for
example for class attributes.
Every possible strategy is between \(\sigma^i_{\mbox{naive}}\) and \(\sigma^i_{\mbox{silly}}\)
when considering the set of generated predecessors,
and it would be interesting to investigate the whole set of possible strategies.\\
A strategy close to \(\sigma^i_{\mbox{naive}}\) increases the number of concepts,
while a strategy close to \(\sigma^i_{\mbox{silly}}\) decreases the number of concepts.
\end{description}

\hypertarget{example-heterogeneous}{%
\subsubsection{Examples}\label{example-heterogeneous}}

\hypertarget{iris-dataset-with-heterogeneous-descriptions}{%
\paragraph{\texorpdfstring{\texttt{Iris} dataset with heterogeneous descriptions}{Iris dataset with heterogeneous descriptions}}\label{iris-dataset-with-heterogeneous-descriptions}}

\begin{figure}
\hypertarget{fig:iris-heterogenous}{%
\centering
\includegraphics{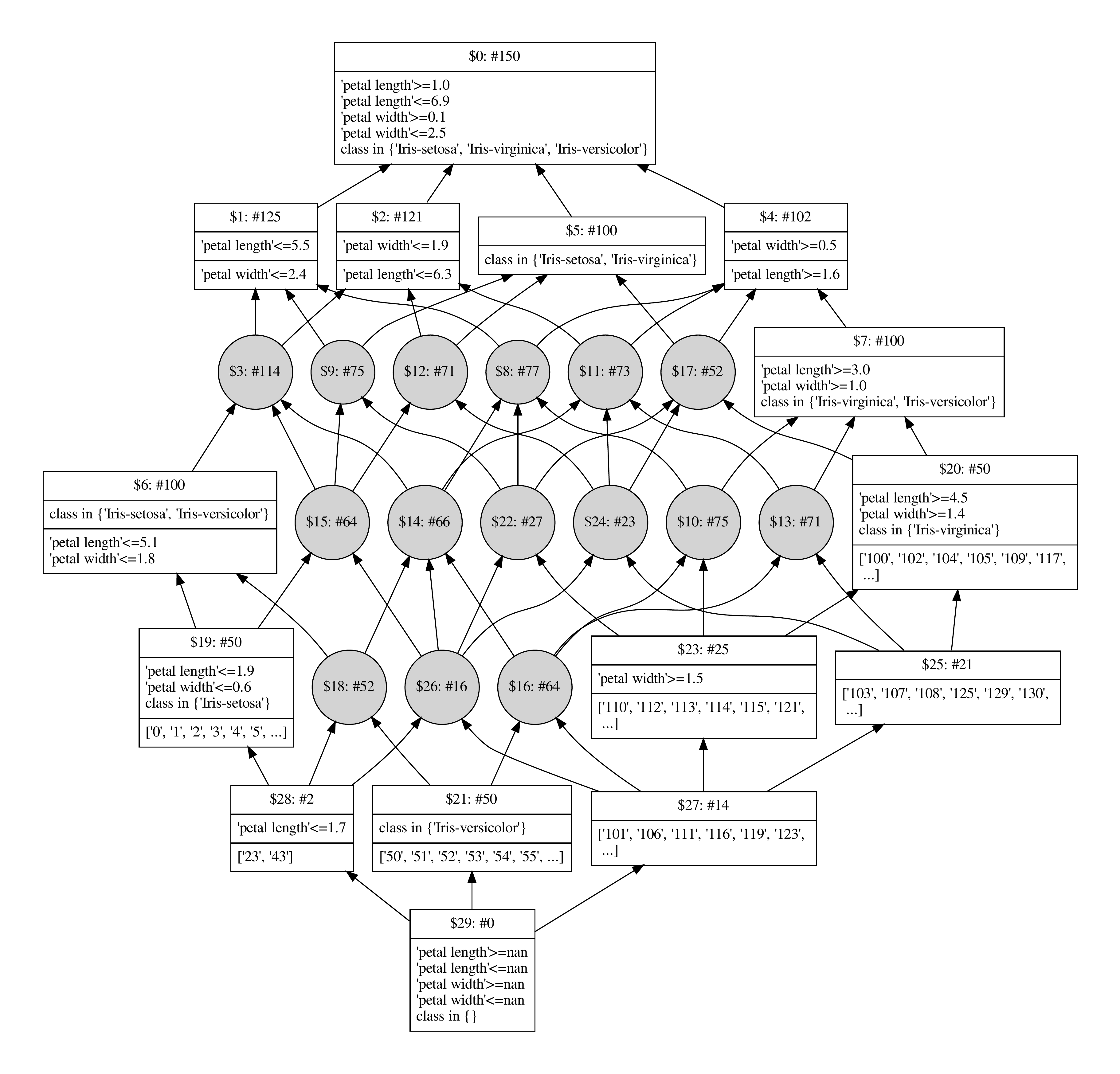}
\caption{\texttt{Iris} dataset with a minimum support (100) strategy using \texttt{petal\ lengh}, \texttt{petal\ width} and \texttt{class}}\label{fig:iris-heterogenous}
}
\end{figure}

Consider the well-known \texttt{Iris} dataset from the UCI Machine Learning
Repository\footnote{\url{https://archive.ics.uci.edu}}, and composed of 150 objects
described by 4 numerical characteristics \texttt{sepal-length}, \texttt{sepal-width},
\texttt{petal-length}, \texttt{petal-width} and classified in 3 classes
\texttt{Setosa}, \texttt{Versicolor}, \texttt{Virginica}.

In this first example, we consider the two \texttt{petal} characteristics separately,
and the \texttt{class} characteristic, thus a combination of two numerical characteristic
with a categorical one.
For each \texttt{petal} characteristics, we use a classical description
by the two predicates \emph{the values are greater than the min} and
\emph{the values are smaller than the max}.
For the \texttt{class} charcateristic, we use the predicate \emph{belongs to class}.

The strategy generates the two selectors
\emph{is the value greater than the mean minus the standard deviation?} and
\emph{is the value smaller than the mean plus the standard deviation?},
and is combined with a meta-strategy limiting new predecessors to those whose support is greater than 100.
combined to the \texttt{class} characteristic,

We obtain the concept lattice displayed in Figure~\ref{fig:iris-heterogenous}
composed of \(30\) concepts:
* concept \$19 corresponds to objects whose class is \texttt{Setosa}
* concept \$20 corresponds to objects whose class is \texttt{Virginica}
* concept \$21 corresponds to objects whose class is \texttt{Versicolor}

\hypertarget{iris-dataset-with-the-entropy-strategy}{%
\paragraph{\texorpdfstring{\texttt{Iris} dataset with the entropy strategy}{Iris dataset with the entropy strategy}}\label{iris-dataset-with-the-entropy-strategy}}

\begin{figure}
\hypertarget{fig:iris-entropy}{%
\centering
\includegraphics{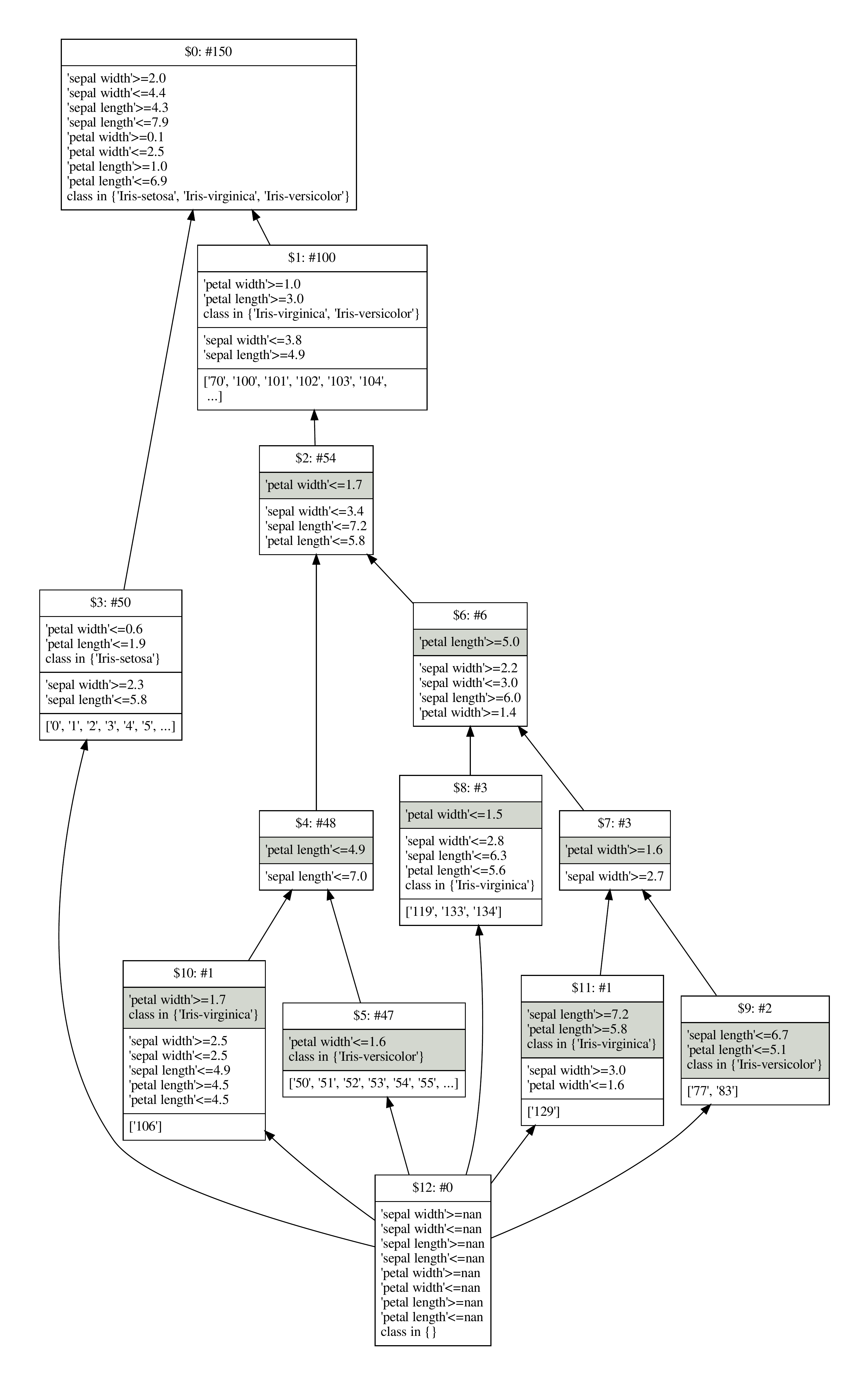}
\caption{\texttt{Iris} dataset with the entropy strategy}\label{fig:iris-entropy}
}
\end{figure}

In the second example, we consider the \texttt{Iris} dataset and the four \texttt{petal}
and \texttt{sepal} characteristics seprately.
We use the following entropy strategy which allows to consider the entropy of
a predecessor \(A'\) of \(A\), but also the entropy of the remaining set \(A\setminus A'\) :

\[H = \theta(H_{A'}) + (1-\theta)H_{A-A'}\]

The more the value of \(\theta\) increases, the more the number of predessors of \(A\) decreases.

\begin{figure}
\hypertarget{fig:iris-scatterplot}{%
\centering
\includegraphics[width=0.8\textwidth,height=\textheight]{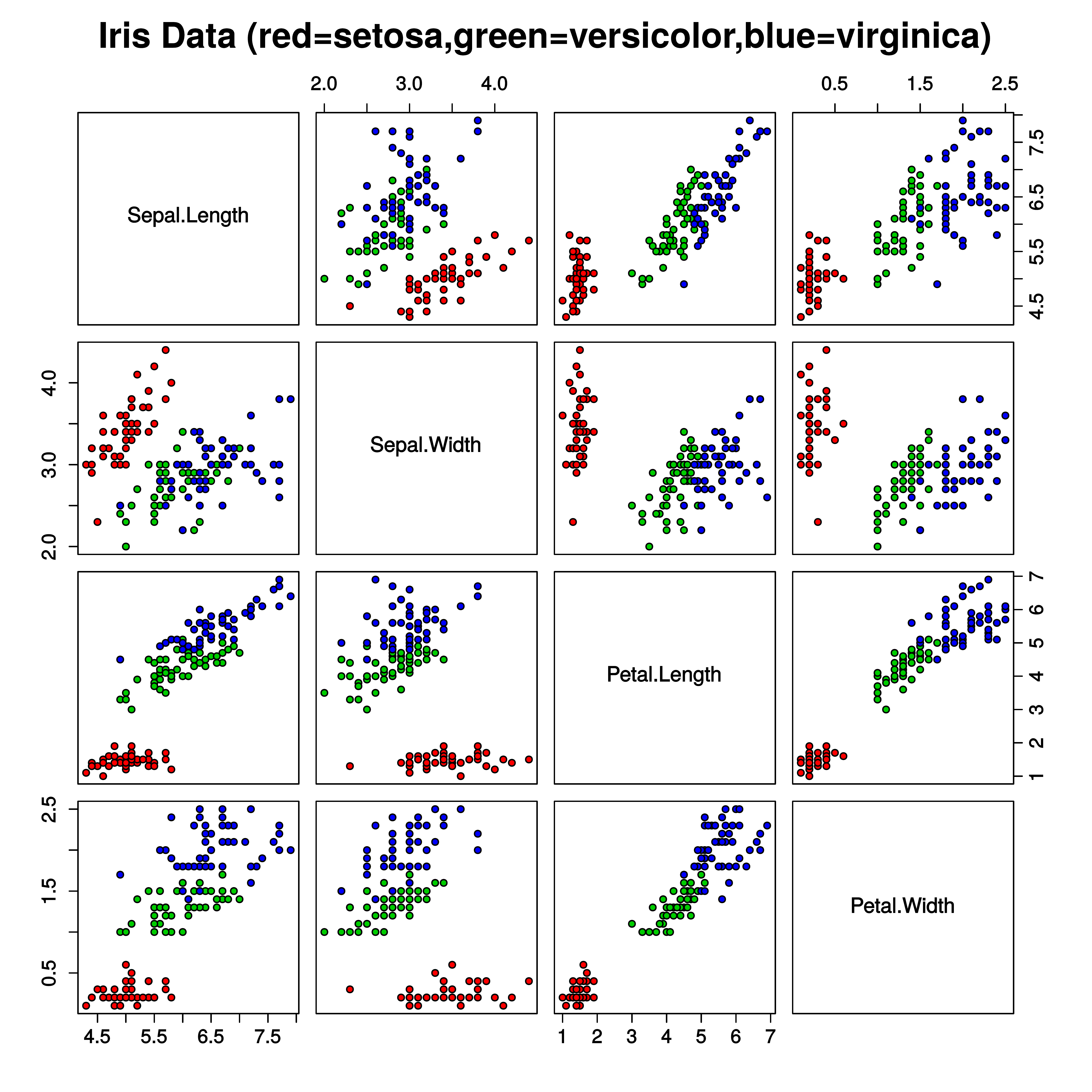}
\caption[\texttt{Iris} dataset]{\texttt{Iris} dataset\footnotemark{}}\label{fig:iris-scatterplot}
}
\end{figure}
\footnotetext{\url{https://commons.wikimedia.org/wiki/File:Iris_dataset_scatterplot.svg}}

We obtain the concept lattice displayed in Figure~\ref{fig:iris-entropy}
(with \(\theta=1/2\)) composed of 13 concepts.
The \texttt{Setosa} iris are quickly separated according to their two \texttt{petal} characteristics (concept \$3).
Indeed, we can observe on the scatterplot in Figure \ref{fig:iris-scatterplot} that this class is
clearly separated from the two others.
We obtain \(4\) concepts for classes \texttt{virginica} and \texttt{versicolor}:

\begin{itemize}
\item
  concept \$10 and \$11 correspond to objects whose class is \texttt{Virginica},
  with only the two \texttt{petal} characteristics used in concept \$10,
  while the \texttt{sepal-length} characteristic is introduced in concept \$11.
\item
  concept \$5 and \$9 correspond to objects whose class is \texttt{Versicolor},
\end{itemize}

\hypertarget{numbers-dataset-with-gcd-and-lcm-as-descriptions}{%
\paragraph{\texorpdfstring{\texttt{Numbers} dataset with GCD and LCM as descriptions}{Numbers dataset with GCD and LCM as descriptions}}\label{numbers-dataset-with-gcd-and-lcm-as-descriptions}}

In this example, we consider the numbers \([36, 48, 56, 64, 84]\) and the
\emph{greatest common divisor (GCD)} and \emph{least common multiple (LCM)} as descriptions.
The strategy consists in adding a new \emph{is divisor of} or \emph{is multiple of} predicate
using a combination of the prime numbers of the set of objects present in the
concept.

\begin{figure}
\hypertarget{fig:numbers-gcd-lcm}{%
\centering
\includegraphics{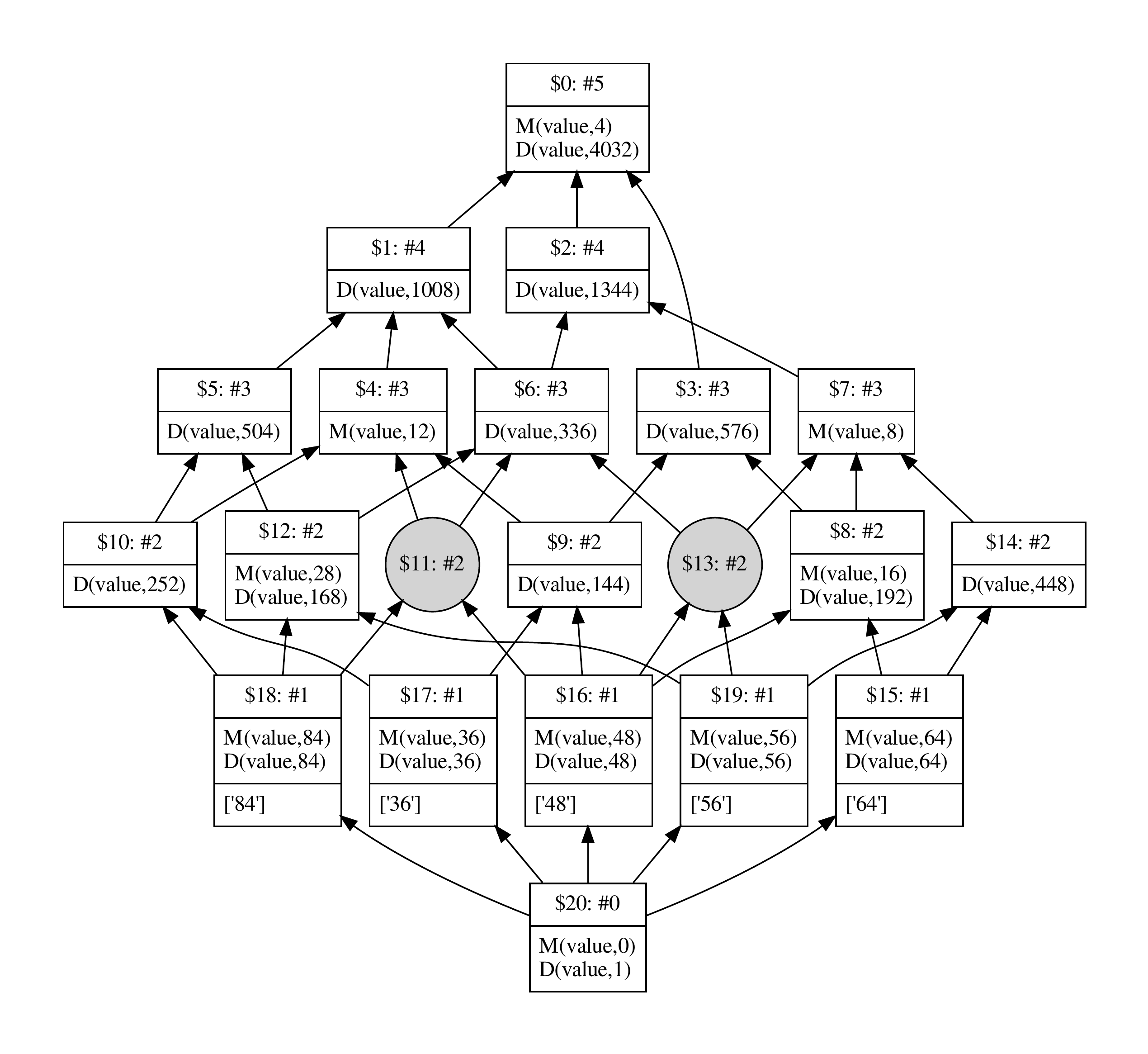}
\caption{\texttt{Numbers} dataset with GCD and LCM as descriptions (M(value, \(x\)) represents the fact that the value is a multiple of \(x\) and D(value, \(x\)) represents the fact that the value is a divisor of \(x\)}\label{fig:numbers-gcd-lcm}
}
\end{figure}

The concept lattice, displayed in Figure~\ref{fig:numbers-gcd-lcm}, is composed of 21 concepts.
The last concept which is the absurd one (the extent is an empty set)
is described by 2 predicates whose conjunction is always false.

\hypertarget{digit-dataset-with-minimal-logical-formula-as-descriptions}{%
\paragraph{\texorpdfstring{\texttt{Digit} dataset with minimal logical formula as descriptions}{Digit dataset with minimal logical formula as descriptions}}\label{digit-dataset-with-minimal-logical-formula-as-descriptions}}

As last example, we consider the \texttt{digit} described by their properties
\textbf{c}omposed, \textbf{e}ven, \textbf{o}dd, \textbf{p}rime and \textbf{s}quare given in Table \ref{table:digit}.

These \(5\) characteristics are considered together, and we compute their minimal boolean formulae
as description using the Quine-McCluskey algorithm \citep{Quine_1952}.
The strategy consists in trying to add an attribute or its negation at each step.

The concept lattice is displayed in Figure~\ref{fig:digit-boolean-hull}.
We can observe that the maximum number of predecessors cannot exceed 5 since the
predecessors are maximum per inclusion.

\begin{figure}
\hypertarget{fig:digit-boolean-hull}{%
\centering
\includegraphics{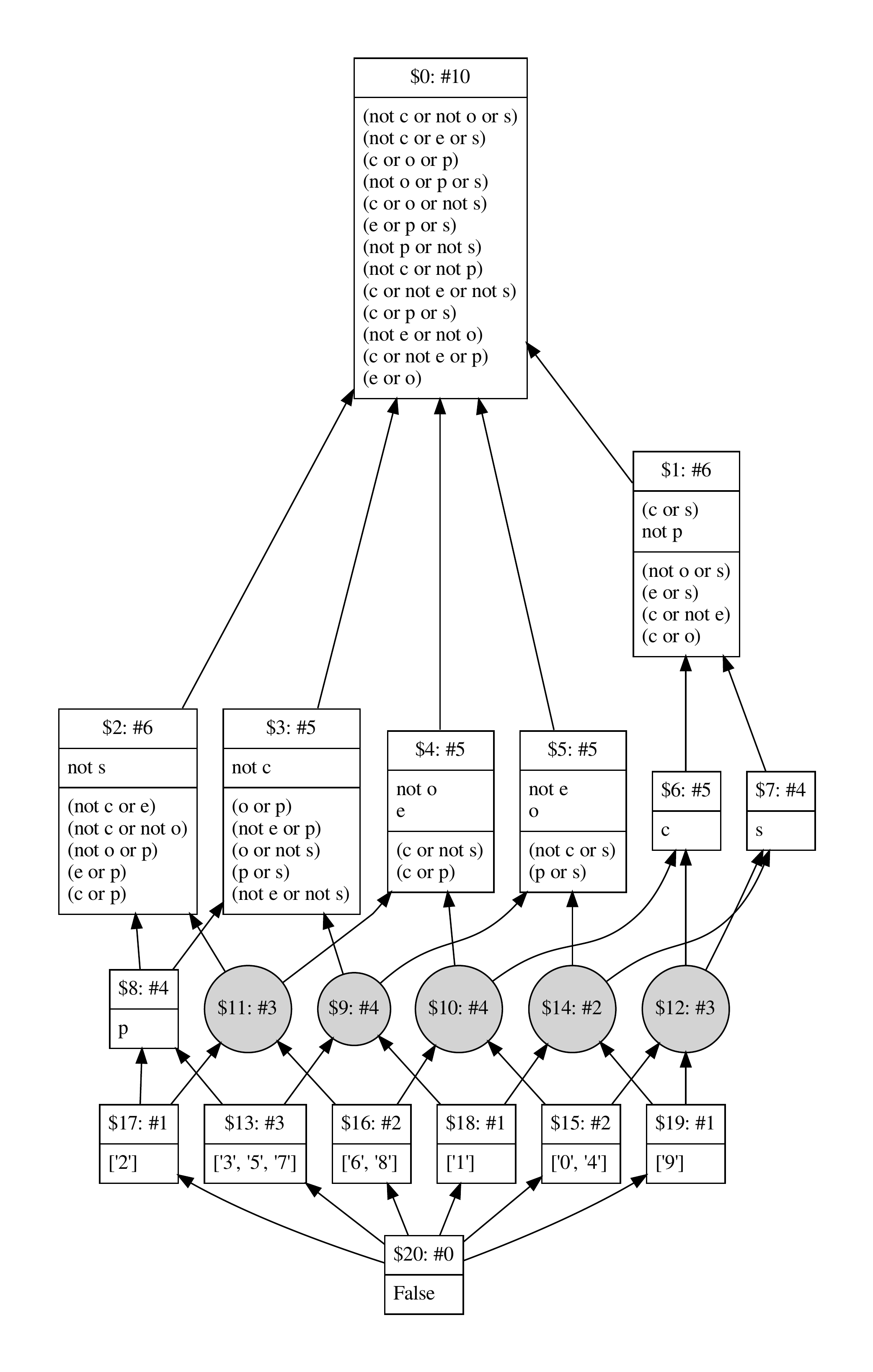}
\caption{\texttt{Digit} dataset with minimal logical formula as description}\label{fig:digit-boolean-hull}
}
\end{figure}

\hypertarget{conclusion}{%
\section{Conclusion}\label{conclusion}}

We have described our \textsc{NextPriorityConcept} algorithm for complex and heterogenous mining
using a pattern discovery approach.

More precisely, our algorithm generates formal concept using the dual version
of Bordat's theorem for the generation
of immediate predecessors (instead of immediate successors), and
where recursion is replaced by a
priority queue using the support of concepts to make sure that concepts are
generated level by level, each concept being generated before its predecessors.
Moreover, a constraint propagation mechanism ensures that meets are correctly
generated.

Heterogeneous data are provided at input with a description mechanism and a
predecessor generation strategy adapted to each kind of data, and generically
described by predicates.

Our algorithm is generic and agnostic since we use predicates whatever the characteristics.
It is implemented with a system of plugins
for an easy integration of new characteristics,
new description, new strategies and new meta-strategies.
We are currently
working on a code diffusion via a development platform, called Galactic
(\textbf{GA}lois \textbf{LA}ttices, \textbf{C}oncept \textbf{T}heory, \textbf{I}mplicational systems and
\textbf{C}losures)\footnote{\url{https://galactic.univ-lr.fr}}.

We have already implemented some descriptions and strategies plugins for
boolean, numeric, categorical attributes, strings and sequences.
We are currently working on descriptions
and strategies for graphs and triadic data.

Acknowledgements: Thanks are owed to Rokia Missaoui and Gaël Lejeune for their constructive and helpfull comments.

\renewcommand\refname{References}
\bibliography{refs.bib}

\end{document}